\documentclass{article}




\usepackage[nonatbib, preprint]{neurips_2021}
\usepackage[utf8]{inputenc} 
\usepackage[T1]{fontenc}    
\usepackage{hyperref}       
\usepackage{url}            
\usepackage{booktabs}       
\usepackage{amsfonts}       
\usepackage{nicefrac}       
\usepackage{microtype}      
\usepackage{comment}
\usepackage{float}
\usepackage{graphicx}
\usepackage{amsmath}
\usepackage{amsthm}
\usepackage{subfigure}
\usepackage{enumitem}
\usepackage{caption}
\usepackage[dvipsnames]{xcolor}
\usepackage[sorting = none, backend = bibtex, style=numeric-comp,maxbibnames=99,firstinits=true]{biblatex}
\addbibresource{ref.bib}

\hypersetup{colorlinks,linkcolor={blue},citecolor={blue},urlcolor={Maroon}}

\newtheorem{theorem}{Theorem}[section]
\newtheorem*{theorem*}{Theorem}

\newtheorem{lemma}[theorem]{Lemma}

\usepackage[linesnumbered, ruled, vlined]{algorithm2e} 

\usepackage{algorithmicx}
\usepackage{multirow}

\usepackage{tikz}


\title{DeCOM: Decomposed Policy for Constrained Cooperative Multi-Agent Reinforcement Learning}

%

\author{
	Zhaoxing Yang\thanks{yiannis@sjtu.edu.cn},
	Rong Ding, 
	Haiming Jin, 
	Yifei Wei, 
	Haoyi You, 
	Guiyun Fan, \\
	\textbf{Xiaoying Gan}, 
	\textbf{Xinbing Wang}\\
	{Shanghai Jiao Tong University}\\
}


\begin{document}
	
	\maketitle
	
	\begin{abstract}
		In recent years, multi-agent reinforcement learning (MARL) has presented impressive performance in various applications. However, physical limitations, budget restrictions, and many other factors usually impose \textit{constraints} on a multi-agent system (MAS), which cannot be handled by traditional MARL frameworks. Specifically, this paper focuses on constrained MASes where agents work \textit{cooperatively} to maximize the expected team-average return under various constraints on expected team-average costs, and develops a \textit{constrained cooperative MARL} framework, named DeCOM, for such MASes. In particular, DeCOM decomposes the policy of each agent into two modules, which empowers information sharing among agents to achieve better cooperation. In addition, with such modularization, the training algorithm of DeCOM separates the original constrained optimization into an unconstrained optimization on reward and a constraints satisfaction problem on costs. DeCOM then iteratively solves these problems in a computationally efficient manner, which makes DeCOM highly scalable. We also provide theoretical guarantees on the convergence of DeCOM's policy update algorithm. Finally, we validate the effectiveness of DeCOM with various types of costs in both toy and large-scale (with 500 agents) environments.
	\end{abstract}
	
	\section{Introduction}\label{introduction}
	Recent years have seen great success of {\textit{multi-agent reinforcement learning (MARL)} in unconstrained {\textit{multi-agent system (MAS)}}, such as video games \cite{NEURIPS2018_7fea637f, pmlr-v97-jaques19a, Vinyals2019GrandmasterLI, Berner2019Dota2W, baker2020emergent}, and many others. However, in practice, an MAS usually works under various constraints introduced by physical limitations, budget restrictions, as well as requirements on certain performance metrics. For example, to avoid collisions, robot swarms have to keep their distances from obstacles and between each other above a threshold  \cite{luo2020multirobot}. As another example, the fairness of power consumption among sensors has to be maintained above a certain level for the sustainability of sensor networks \cite{xu2020learning}.
	
	In practice, many constrained MASes are \textit{cooperative} in nature, where agents cooperatively maximize the team-average return under constraints on certain types of team-average costs. Such cooperation exists in the above robot swarms and sensor networks, as well as other real-world scenarios, such as managing a fleet of ridesharing vehicles \cite{efficient_fm, coride, tmc} where the unfairness among drivers' incomes has to be upper bounded for sufficient driver satisfaction. Inevitably, such joint requirement of cooperation and constraint satisfaction calls for new decision-making mechanisms. Therefore, in this paper, we aim to \textit{develop an MARL framework specifically tailored for constrained cooperative MASes}. 

	One intuitive solution is to directly extend existing single-agent constrained reinforcement learning \cite{achiam2017constrained,tessler2018reward,ly1,pmlr-v97-le19a} to our multi-agent setting, by utilizing a centralized controller to compute the joint actions of all agents. However, it is hard to scale such approach to MASes with a large number of agents. Instead, we achieve scalability by adopting the \textit{centralized training decentralized execution} framework \cite{foerster2016learning}, where each agent is equipped with a local policy that makes decisions without the coordination from any central controller. 
	
	However, it is usually challenging for decentralized decision making to achieve cooperation. To address this challenge, we propose a novel constrained cooperative MARL framework, named DeCOM\footnote{The name DeCOM comes from \underline{De}composed policy for \underline{C}onstrained c\underline{O}operative \underline{M}ARL.}, which facilitates agent cooperation by appropriate information sharing among them. Specifically, DeCOM decomposes an agent's local policy into a \textit{base policy} and a \textit{perturbation policy}, where the former outputs the agent's  base action and shares it with other agents, and the latter aggregates other agents' base actions to compute a perturbation. DeCOM then combines the base action and the perturbation to obtain the agent's final action. Such base action sharing mechanism provides an agent timely and necessary information about others, which helps the agent better regulate its actions for cooperation. 
	
	Furthermore, in our constrained MAS setting, agents' optimal policies correspond to the optimal solution of a constrained optimization which is intractable to solve directly. DeCOM addresses this issue by training the base policy to optimize the return and the perturbation policy to decrease the constraints violation, respectively. Such learning framework essentially decomposes the original constrained optimization into an  \textit{unconstrained optimization} and \textit{constraints satisfaction} problem, which is computationally efficient, easy-to-implement, and end-to-end.

	\textbf{Contributions.} The contributions of this paper are three-fold. First, to the best of our knowledge, this paper is the first that develops a constrained cooperative MARL framework. Our proposed framework DeCOM and its end-to-end training algorithm are both scalable and computationally efficient. Second, we give theoretical results which show that DeCOM's policy update algorithm is guaranteed to converge within limited number of steps under only mild assumptions. Third, in addition to toy environments, we also conduct experiments in a large-scale environment with 500 agents based on a real-world dataset with roughly 1 million ride-hailing orders from Nov. 1 to 30, 2016, in Chengdu, China. Furthermore, the various types of costs considered in the experiments, including unsafety, unfairness, and operational costs show the potentially wide applications of DeCOM.
	
	\section{Constrained Cooperative Markov Game}
	We consider \textit{constrained cooperative Markov game (CCMG)}, which is defined by a tuple $([N], \mathcal{S}, \{\mathcal{O}_i\}_{i=1}^N, \{\mathcal{A}_i\}_{i=1}^N, p, \{r_i\}_{i=1}^N, \{c^1_i\}_{i=1}^N, \cdots, \{c^M_i\}_{i=1}^N, \{D_j\}_{j=1}^M, p_0, \gamma)$. In a CCMG, $[N]=\{1,\cdots,N\}$ denotes the set of agents, and $\mathcal{S}$ denotes the global state space. Each agent $i$ has an observation space $\mathcal{O}_i$ and action space\footnote{We consider continuous action space in this paper.} $\mathcal{A}_i$. At each global state $s\in\mathcal{S}$, each agent $i$ only has limited observation $o_i=T(s, i)\in\mathcal{O}_i$, where $T(s, i)$ maps the global state $s$ to agent $i$'s observation. At each time step, each agent $i$ chooses an action $a_i$ from its action space $\mathcal{A}_i$. Given the joint action $\boldsymbol{a}=[a_1,\cdots,a_N]$ and the current global state $s$, the CCMG transits to the next global state $s'$ with probability $p(s'|s, \boldsymbol{a})$, and each agent $i$ receives an immediate reward $r_i(s, \boldsymbol{a})$ and $M$ types of immediate costs, denoted as $c^1_i(s, \boldsymbol{a}), c^2_i(s, \boldsymbol{a}),\cdots,c^M_i(s, \boldsymbol{a})$. Furthermore, $p_0$ denotes the initial global state distribution, and the constant $\gamma\in [0, 1]$ denotes the discount factor for future rewards.

Each agent $i$ selects its actions based on a local policy $\pi_i:\mathcal{O}_i\mapsto\Omega(\mathcal{A}_i)$, where $\Omega(\mathcal{A}_i)$ denotes all possible distributions over space $\mathcal{A}_i$. We denote $\boldsymbol{\pi}=[\pi_1,\cdots,\pi_N]\in\Psi$ as the joint policy of $N$ agents, where $\Psi$ denotes the set of all possible joint policies. Each agent $i$'s expected long-term discounted return $J^R_i(\boldsymbol{\pi})$ and expected long-term discounted cost $J^{C_j}_{i}(\boldsymbol{\pi})$ for each type $j\in[M]=\{1,\cdots,M\}$ are defined in Eq. (\ref{reward:expected_return}) and \eqref{cost:expected_return}, respectively.
\begin{align}
   J^R_i(\boldsymbol{\pi})&=\mathbb{E}_{\boldsymbol{\pi}, p, p_0}\bigg[\sum_{t=0}^{\infty}\gamma^t r_i(s_t, \boldsymbol{a_t})\bigg].\label{reward:expected_return}\\
   J^{C_j}_{i}(\boldsymbol{\pi})&=\mathbb{E}_{\boldsymbol{\pi}, p, p_0}\bigg[\sum_{t=0}^{\infty}\gamma^t c^{j}_i(s_t, \boldsymbol{a_t})\bigg].\label{cost:expected_return}
\end{align}

We consider the CCMG where agents work cooperatively to maximize the expected team-average return\footnote{Team-average reward has been widely considered in prior works (e.g., \cite{Qu2019ValuePF, pmlr-v80-zhang18n}).} $\frac{1}{N}\sum_{i\in[N]}J^R_i(\boldsymbol{\pi})$, and to  ensure that the expected team-average cost $\frac{1}{N}\sum_{i\in[N]}J^{C_j}_i(\boldsymbol{\pi})$ for each type $j\in[M]$ is upper bounded by $D_j$. Thus, the optimal joint policy to our CCMG is the optimal solution to the following Problem (\ref{original_problem}).

\begin{equation}\label{original_problem}
\begin{aligned}
    \underset{\boldsymbol{\pi}\in\Psi}{\max}\;J^R(\boldsymbol{\pi})& = \frac{1}{N}\sum_{i\in[N]}J^R_i(\boldsymbol{\pi}) \\
     \text{s.t. }J^{C_j}(\boldsymbol{\pi})& = \frac{1}{N}\sum_{i\in[N]}J^{C_j}_i(\boldsymbol{\pi}) \leq D_j,\;\; \forall{j}\in[M].
\end{aligned}    
\end{equation}

To obtain the optimal policy of our CCMG is exactly the objective of this paper. However, as we consider the practical scenario where the state transition kernel is unknown \textit{a priori}, the optimal policy could not be obtained by directly solving Problem (\ref{original_problem}). Thus, we take the approach of learning such policy via a novel framework of MARL, which will be elaborated in the following Section \ref{sec:framework}.


	\section{DeCOM Framework}\label{sec:framework}
\begin{figure}[t]
	\centering
	\subfigure[\textbf{Forward computation flow.}]{
		\begin{minipage}[t]{0.49\linewidth}
			\setlength{\abovecaptionskip}{0pt}
			\setlength{\belowcaptionskip}{5pt}
			\centering
			\includegraphics[width=2.7in]{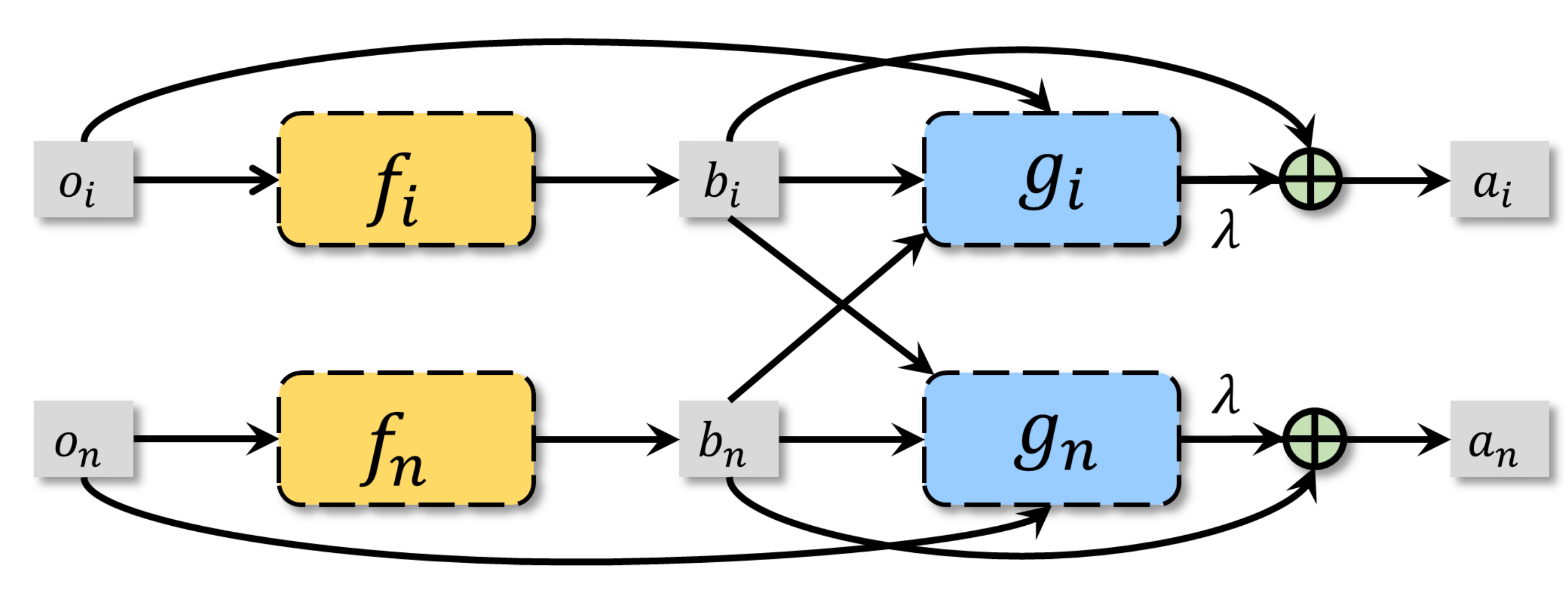}
			\label{fig:decom1}
	\end{minipage}}
	\subfigure[\textbf{Backward gradient flow.}]{
		\begin{minipage}[t]{0.49\linewidth}
			\setlength{\abovecaptionskip}{0pt}
			\setlength{\belowcaptionskip}{5pt}
			\centering
			\includegraphics[width=2.7in]{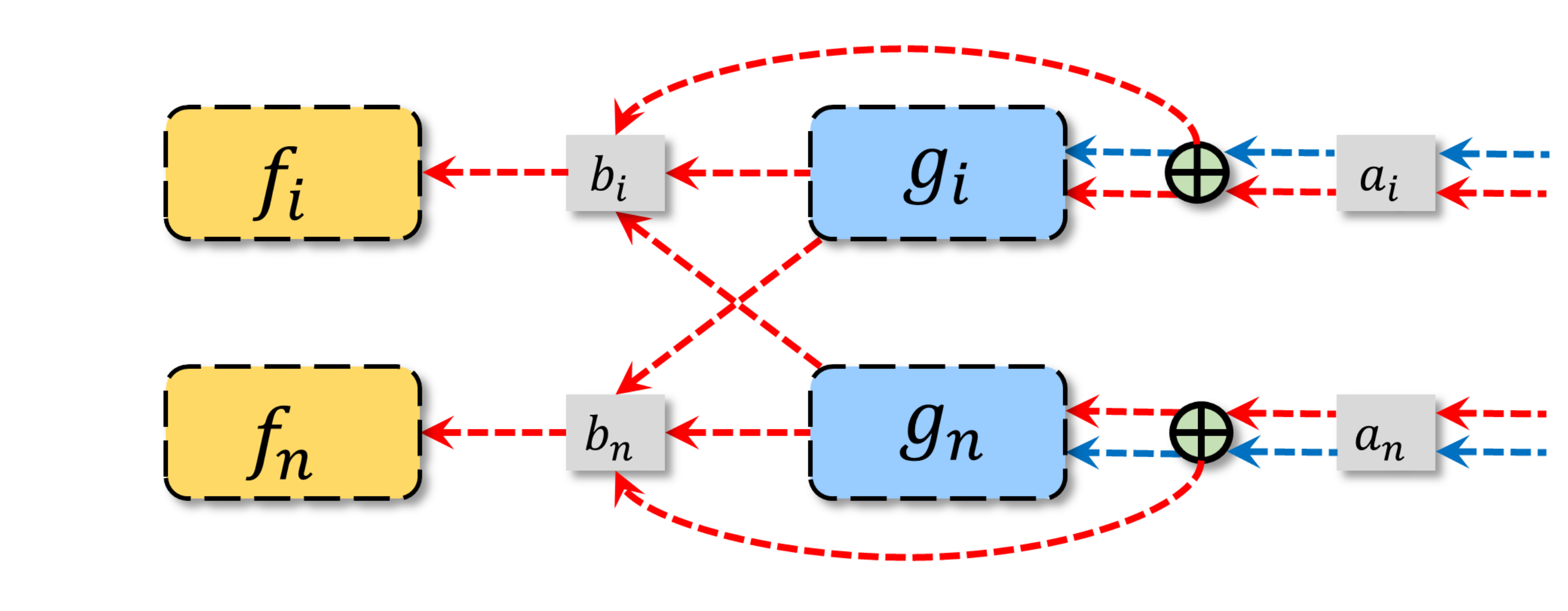}
			\label{fig:decom2}
	\end{minipage}}
	\vskip -5pt
	\caption{DeCOM framework for neighboring agents $i$ and $n$, where the red dashed arrows in Fig. \ref{fig:decom2} indicate the gradient to maximize the expected team-average return and the blue ones indicate the gradient to minimize the constraints violation.}
	\vskip -5pt
\end{figure}
As shown by Problem (\ref{original_problem}), in addition to cooperatively maximizing the expected team-average return, agents have to satisfy the constraint on each type of expected team-average cost. Such additional dimension of cooperation makes it more imperative that agents share timely and necessary information with others, so that agents could better regulate their actions based on their understandings about other agents. Therefore, we propose a novel MARL framework, named DeCOM, which enables communication among agents by decomposing the policy of each agent into a base policy and a perturbation policy, as shown in Figure \ref{fig:decom1}.

More specifically, at each time step, each agent $i$'s base policy $f_i$ receives a local observation $o_i$ from the environment, and outputs a base action $b_i\in\mathcal{A}_i$ which is shared with its neighbors. We define the neighbor set\footnote{The neighbor set can be decided by physical proximity or other factors, depending on the actual scenario.} of each agent $i$ as the set of agents that are able to communicate with it, and denote it as $\mathcal{N}_i$. Then, each agent $i$'s perturbation policy $g_i$ takes as inputs its observation $o_i$, its own base action $b_i$, as well as the base actions of its neighbors $\boldsymbol{b}_{\mathcal{N}_i}=[b_j]_{j\in\mathcal{N}_i}$, and outputs a perturbation $g_i(o_i, b_i, \boldsymbol{b}_{\mathcal{N}_i})$. A scaled perturbation is then added to $b_i$ to obtain the final action $a_i$. That is,
\begin{equation}\label{policy}
    \begin{aligned}
    a_i\;=\;b_i\;+\;\lambda\;g_i(o_i, b_i, \boldsymbol{b}_{\mathcal{N}_i}),\;
     \text{with }\; b_i\sim f_i(\cdot|o_i),
    \end{aligned}
\end{equation}
where $\lambda$ controls the magnitude of the perturbation, and $f_i(\cdot|o_i)$ is a probability distribution over the action space $\mathcal{A}_i$. Note that $f_i$ could also be a deterministic function, which is in fact a special case of a stochastic one. In contrast, DeCOM fixes $g_i$ as deterministic for strong representation power, as shown in the proof of Theorem \ref{representation_power} given in Appendix \ref{detail_proof:representation_power}.

Figure \ref{fig:decom2} also shows the gradient flows in the training procedure. In DeCOM, $f_i$ is updated by gradient ascent over $J^R(\boldsymbol{\pi})$, and thus $f_i$ is in charge of improving the expected team-average return. In contrast, $g_i$ receives the gradient to minimize constraints violation, making $g_i$ undertake the duty of perturbing the base action to satisfy constraints. Such modularization essentially divides Problem (\ref{original_problem}) into an unconstrained optimization problem and a constraints satisfaction problem. This ``divide and conquer'' method not only enables simple end-to-end training, but also avoids the heavy computation to solve complex constrained optimization problems which is inevitable in previous solution methods for constrained Markov decision process \cite{achiam2017constrained, pmlr-v119-satija20a, yu2019convergent, yang2020projection}. Furthermore, as shown in Figure \ref{fig:decom2}, DeCOM incorporates the gradients from $\mathcal{N}_i$ to update $f_i$, because gradient sharing among agents could facilitate agent cooperation as shown by recent studies \cite{DIAL, LAC}. 


We next show that DeCOM's decomposed policy structure dose not reduce the representation power in Theorem \ref{representation_power}, whose proof is given in Appendix \ref{detail_proof:representation_power}.

\begin{theorem}\label{representation_power}
Let $\Psi_{\textnormal{DeCOM}}$ contain all possible joint policies representable by DeCOM, and $\boldsymbol{\pi}^{*}\in\Psi$ be the optimal solution to Problem (\ref{original_problem}). Then, for the optimal joint policy $\boldsymbol{\pi}^+\in\Psi_{\textnormal{DeCOM}}$, we have $J^R(\boldsymbol{\pi}^+)=J^R(\boldsymbol{\pi}^{*})$ and $J^{C_j}(\boldsymbol{\pi}^+)=J^{C_j}(\boldsymbol{\pi}^{*})$, $\forall{j}\in[M]$.
\end{theorem}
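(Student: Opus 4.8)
The plan is to prove the claim constructively, by exhibiting a single DeCOM parameterization that reproduces the optimal product policy $\boldsymbol{\pi}^*$ exactly, and then invoking its optimality. Since every $\boldsymbol{\pi}^*\in\Psi$ is a tuple of local policies $[\pi_1^*,\dots,\pi_N^*]$ with $\pi_i^*:\mathcal{O}_i\to\Omega(\mathcal{A}_i)$, the first step is to embed it into $\Psi_{\textnormal{DeCOM}}$ as a policy $\tilde{\boldsymbol{\pi}}$ by choosing, for each agent $i$, the base policy $f_i:=\pi_i^*$ and the perturbation policy $g_i\equiv 0$ (the constant-zero map). By Eq. (\ref{policy}) this yields $a_i=b_i+\lambda\cdot 0=b_i$ with $b_i\sim\pi_i^*(\cdot\mid o_i)$, so each agent's action distribution coincides with $\pi_i^*(\cdot\mid o_i)$.

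The second step is to verify that this embedding induces the identical law over trajectories. Because the base actions are drawn independently across agents and $a_i=b_i$ for every $i$, the conditional joint action distribution factorizes as $\prod_{i\in[N]}\pi_i^*(a_i\mid o_i)$ at every global state $s$, which is exactly the joint action law under $\boldsymbol{\pi}^*$. Sharing the same initial distribution $p_0$, the same transition kernel $p$, and the same per-step action law, $\tilde{\boldsymbol{\pi}}$ and $\boldsymbol{\pi}^*$ induce identical distributions over $(s_t,\boldsymbol{a}_t)_{t\ge 0}$. Consequently, by Eqs. (\ref{reward:expected_return})--(\ref{cost:expected_return}) every per-agent return $J_i^R$ and every per-agent cost $J_i^{C_j}$ agrees term by term, so the team averages satisfy $J^R(\tilde{\boldsymbol{\pi}})=J^R(\boldsymbol{\pi}^*)$ and $J^{C_j}(\tilde{\boldsymbol{\pi}})=J^{C_j}(\boldsymbol{\pi}^*)$ for all $j$. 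In particular $\tilde{\boldsymbol{\pi}}$ is feasible for Problem (\ref{original_problem}) and attains objective value $J^R(\boldsymbol{\pi}^*)$.

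Combining these gives the result. Since $\tilde{\boldsymbol{\pi}}\in\Psi_{\textnormal{DeCOM}}$ is feasible, the DeCOM-optimal $\boldsymbol{\pi}^+$ obeys $J^R(\boldsymbol{\pi}^+)\ge J^R(\tilde{\boldsymbol{\pi}})=J^R(\boldsymbol{\pi}^*)$; and because the embedding reproduces $\boldsymbol{\pi}^*$ without any loss, one can take $\boldsymbol{\pi}^+$ to achieve exactly the return and the costs of $\boldsymbol{\pi}^*$, yielding the claimed equalities $J^R(\boldsymbol{\pi}^+)=J^R(\boldsymbol{\pi}^*)$ and $J^{C_j}(\boldsymbol{\pi}^+)=J^{C_j}(\boldsymbol{\pi}^*)$ for every $j\in[M]$.

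I expect the delicate point to be the matching direction rather than the mere inequality: because base-action sharing lets each final action $a_i$ depend on the neighbors' base actions $\boldsymbol{b}_{\mathcal{N}_i}$, a general DeCOM policy can correlate the joint action $\boldsymbol{a}$ in ways a product policy in $\Psi$ cannot, so some care is needed to argue that this extra correlation does not let $\boldsymbol{\pi}^+$ strictly exceed $\boldsymbol{\pi}^*$, and hence that all of the return and every cost coincide simultaneously. The forward embedding of the first two steps (zeroing $g_i$) is the clean part---essentially a change-of-variables check that the induced trajectory distribution is unchanged---while the tying of this faithful copy to the optimality of $\boldsymbol{\pi}^*$ is where I would focus the rigor.
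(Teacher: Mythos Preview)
Your argument is correct for the embedding direction and takes a genuinely different, more elementary route than the paper. The paper does not simply set $g_i\equiv 0$; instead it proves an auxiliary lemma that, for any $\boldsymbol{\pi}\in\Psi$, rewrites $J^R(\boldsymbol{\pi})$ through a chain of integral manipulations (introducing a latent base action $\boldsymbol{b}$, using $P(\boldsymbol{a}\mid\boldsymbol{b},s_0)$ being a Dirac mass since $\boldsymbol{g}$ is deterministic) to obtain the form $\int p_0(s_0)\int f(\boldsymbol{b}\mid s_0)\,Q^{\boldsymbol{\pi}}(s_0,\boldsymbol{b}+\lambda\boldsymbol{g}(s_0,\boldsymbol{b}))\,d\boldsymbol{b}\,ds_0$, and then argues via the induced state--action occupancy measure that the same $(f,\boldsymbol{g})$ matches every $J^{C_j}$ as well. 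Your ``set $f_i=\pi_i^*$, $g_i\equiv 0$'' shortcut reaches the same conclusion with essentially no computation; what the paper's route buys is only the side remark that deterministic $\boldsymbol{g}$ is already enough for full expressivity, which your construction also demonstrates trivially.

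On the reverse inequality you correctly identify as delicate, the paper does not do more than you do. Its entire argument is a one-line contradiction: if some $\boldsymbol{\pi}'\in\Psi_{\textnormal{DeCOM}}$ were feasible with $J^R(\boldsymbol{\pi}')>J^R(\boldsymbol{\pi}^+)=J^R(\boldsymbol{\pi}^*)$, then $\boldsymbol{\pi}^*$ would not be optimal for Problem~(\ref{original_problem}). This tacitly treats every DeCOM policy as a competitor in $\Psi$, i.e.\ it presumes exactly the containment (or at least return-dominance) whose subtlety you flag regarding cross-agent correlation through shared base actions. So your proposal is at least as rigorous as the paper's on this point, and more explicit about where the care is needed; the paper simply asserts the step rather than addressing it.
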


Essentially, Theorem \ref{representation_power} states that the optimal joint policy under DeCOM yields the same expected term-average return and costs as that of the CCMG. Such result further validates our choice of decomposing the policy as in DeCOM. In this paper, we adopt the practical approach of realizing each $f_i$ and $g_i$ by neural networks, and denote the parameters for $f_i$ and $g_i$ as $\theta_i$ and $\phi_i$, respectively. To further simplify notation, we let $\boldsymbol{\theta}=[\theta_1,\cdots,\theta_N]$, $\boldsymbol{\phi}=[\phi_1,\cdots,\phi_N]$, and treat both $\boldsymbol{\theta}$ and $\boldsymbol{\phi}$ as vectors. We next represent agents' joint base policy as $\boldsymbol{f}=[f_1,\cdots,f_N]$,  and joint perturbation policy as $\boldsymbol{g}=[g_1,\cdots,g_N]$. Thus, under the DeCOM framework, the return and costs satisfy that $J^R(\boldsymbol{\pi})=J^R(\boldsymbol{f}, \boldsymbol{g})$ and $J^{C_j}(\boldsymbol{\pi})=J^{C_j}(\boldsymbol{f}, \boldsymbol{g}), \forall{j}\in[M]$. 
	
	\section{Training Algorithm}\label{sec:training_alg}
	\subsection{Algorithm Overview}

\begin{algorithm}[t]
	\small
	\noindent
    Initialize $\mathcal{D}\leftarrow\emptyset$; Initialize reward critic $Q^{\eta_0}$,
    cost critics $Q^{\zeta_{j,0}}, \forall{j}\in[M]$, $\boldsymbol{\theta}_0$, $\boldsymbol{\phi}_0$; Initialize $\lambda,\delta$\;\label{l1}  
    $\eta'_{0}\leftarrow \eta_{0}$,\,\,$\zeta_{j, 0}'\leftarrow \zeta_{j, 0}, \forall{j}\in[M]$,\,\,$\boldsymbol{\theta}'_{0}\leftarrow \boldsymbol{\theta}_{0}$,\,\,$\boldsymbol{\phi}'_{0}\leftarrow \boldsymbol{\phi}_{0}$\;\label{l2}  
    \ForEach{episode $k=0$ to max-episodes\label{l3}}{
        \ForEach{$t=0$ to episode-length\label{l4}}{
            Each agent $i$ selects base action $b_i$ based on $f_i$, and shares it with $\mathcal{N}_i$\;\label{l5}
            Each agent $i$ calculates action $a_i\leftarrow b_i+\lambda\,g_{i}(o_i, b_i, \boldsymbol{b}_{\mathcal{N}_i})$, and executes $a_i$\;\label{l6}
            Observe team-average reward $r$ and costs $c^j, \forall{j}\in[M]$, and next global state $s_{t+1}$\;\label{l7}
            Store experience $(s_t, \boldsymbol{b}, \boldsymbol{a}, r, \{c^j\}_{j=1}^M, s_{t+1})$ into $\mathcal{D}$\;\label{l8}
        }
        Sample a random mini-batch of $L$ transitions $\mathcal{B}=\{(s_l, \boldsymbol{b}_l, \boldsymbol{a}_l, r_l, \{c^j_l\}_{j=1}^M, s'_l)\}_{l=1}^L$ from $\mathcal{D}$\;\label{l9}
        Update reward critic by minimizing Eq. (\ref{td error:reward}), and cost critics by minimizing Eq. (\ref{td error:cost}) \;\label{l11}
        Update $\boldsymbol{\theta}_k$ and $\boldsymbol{\phi}_k$ to $\boldsymbol{\theta}_{k+1}$ and $\boldsymbol{\phi}_{k+1}$ according to Alg. \ref{updating_alg}\;\label{l13}
        $\eta'_{k+1}\leftarrow \delta\,\eta_{k+1}+(1-\delta)\,\eta'_{k}$,\,\,$\zeta_{j, k+1}'\leftarrow \delta\,\zeta_{j, k+1}+(1-\delta)\,\zeta_{j, k}', \forall{j}\in[M]$\;\label{l14}
        $\boldsymbol{\theta}'_{k+1}\leftarrow \delta\,\boldsymbol{\theta}_{k+1}+(1-\delta)\,\boldsymbol{\theta}'_{k}$,\,\,$\boldsymbol{\phi}'_{k+1}\leftarrow \delta\,\boldsymbol{\phi}_{k+1}+(1-\delta)\,\boldsymbol{\phi}'_{k}$\;\label{l15}
    }
    \caption{Training Algorithm of DeCOM}
    \label{decom_alg}
\end{algorithm}

Our training algorithm of DeCOM follows the actor-critic framework, as shown in Alg. \ref{decom_alg}. At each episode $k$, agents interact with the environment and the experiences of such interactions are collected into buffer $\mathcal{D}$ (line \ref{l4}-\ref{l8}). Then, the algorithm samples a mini-batch from $\mathcal{D}$, and updates the reward and cost critics by minimizing the TD error over the mini-batch (line \ref{l9}-\ref{l11}). After that, $\boldsymbol{\theta}$ and $\boldsymbol{\phi}$ get updated through Alg. \ref{updating_alg} (line \ref{l13}), which will be elaborated in Section \ref{sec:policy update}. Finally, Alg. \ref{decom_alg} performs soft update for the target networks to stabilize learning (line \ref{l14}-\ref{l15}). In what follows, we present our method of updating the critics, and the parameters $\boldsymbol{\theta}$ and $\boldsymbol{\phi}$ in detail.


\subsection{Updating Critics}
At each episode $k$ of Alg.\ref{decom_alg}, we update the reward critic by minimizing the TD error over the sampled mini-batch of $L$ transitions, given in the following Eq. \eqref{td error:reward}, 
\begin{align}\label{td error:reward}
    \mathcal{L}(\eta_k) = \frac{1}{L}\sum_{l=1}^L\big(q_l^R-Q^{\eta_k}(s_l, \boldsymbol{a}_l)\big)^2,\;\text{with }\; q_l^R=r_l+\gamma Q^{\eta'_k}(s'_l, \boldsymbol{a}'_l)|_{\boldsymbol{a}'_l\sim\boldsymbol{\pi}_{\boldsymbol{\theta}'_k,\boldsymbol{\phi}'_k}},
\end{align}
where $Q^{\eta_k}$ is the reward action-value function with parameter $\eta_k$, and $\boldsymbol{\pi}_{\boldsymbol{\theta}'_k,\boldsymbol{\phi}'_k}$ denotes the target joint policy with parameters $\boldsymbol{\theta}'_k$ and $\boldsymbol{\phi}'_k$. 

The cost critics are updated in a similar manner with the TD error given in Eq. \eqref{td error:cost} for each $j\in[M]$,
\begin{align}\label{td error:cost}
    \mathcal{L}(\zeta_{j,k}) = \frac{1}{L}\sum_{l=1}^L\big(q_l^{C_j}-Q^{\zeta_{j,k}}(s_l, \boldsymbol{a}_l)\big)^2,\;\text{with }\; q_l^{C_j}=c^j_l+\gamma Q^{\zeta'_{j,k}}(s'_l, \boldsymbol{a}'_l)|_{\boldsymbol{a}'_l\sim\boldsymbol{\pi}_{\boldsymbol{\theta}'_k,\boldsymbol{\phi}'_k}},
\end{align}
where $Q^{\zeta_{j,k}}$ is the action-value function on cost $j$ with parameter $\zeta_{j,k}$.


\subsection{Updating Policies}\label{sec:policy update}
\subsubsection{Algorithm Overview}
We present in Alg. \ref{updating_alg} the algorithm for updating the policy parameters, which is called on line \ref{l13} of Alg. \ref{decom_alg}. At each episode $k$ of Alg. \ref{decom_alg}, Alg. \ref{updating_alg} takes $\boldsymbol{\theta}_{k}$ and $\boldsymbol{\phi}_{k}$ as input, and updates $\boldsymbol{\theta}_k$ to $\boldsymbol{\theta}_{k+1}$ (line \ref{policy_update:l3}), and $\boldsymbol{\phi}_k$ to $\boldsymbol{\phi}_{k+1}$ (line \ref{policy_update:l4}-\ref{policy_update:l8}), whose details will be elaborated in Section \ref{optimize_f} and  \ref{optimize_g}.

\begin{algorithm}
\small
\textbf{Input:} Sampled mini-batch $\mathcal{B}$; $\boldsymbol{\theta}_k$, $\boldsymbol{\phi}_k$\;\label{policy_update:l1}
\textbf{Output:} $\boldsymbol{\theta}_{k+1}$, $\boldsymbol{\phi}_{k+1}$\;\label{policy_update:l2}
Update $\boldsymbol{\theta}_k$ to $\boldsymbol{\theta}_{k+1}$ by Eq. (\ref{update_f:stochastic_main_paper})\tcp*{\small By Eq.(\ref{update_f:deterministic}) in Appendix \ref{update_f}, if $\boldsymbol{f}$ is deterministic.}\label{policy_update:l3}
$\boldsymbol{\varphi}_0\leftarrow\boldsymbol{\phi}_k$\;\label{policy_update:l4}
\ForEach{$w=0$ to $W$\label{policy_update:l5}}{
    $j^*\leftarrow\arg\max_{j\in[M]}\Tilde{\mathcal{L}}_j(\boldsymbol{\varphi}_w; \boldsymbol{\theta}_{k+1})$\;\label{policy_update:l6}
    $\boldsymbol{\varphi}_{w+1}\leftarrow\Gamma_{\Phi}\Big[\boldsymbol{\varphi}_w-\tau\cdot\text{Clip}\big(\nabla_{\boldsymbol{\phi}}\mathcal{L}_{j^*}(\boldsymbol{\varphi}_w; \boldsymbol{\theta}_{k+1})\big)\Big]$\tcp*{$\Gamma_{\Phi}$ is the projection operator.}\label{policy_update:l7}
}
$\boldsymbol{\phi}_{k+1}\leftarrow\boldsymbol{\varphi}_W$\;\label{policy_update:l8}
\caption{Policy Update Algorithm}
\label{updating_alg}
\end{algorithm}

\subsubsection{Updating $\boldsymbol{\theta}$}\label{optimize_f}
DeCOM updates ${\boldsymbol{\theta}}$ via policy gradient methods. Next, we present the policy gradient of $J^R({\boldsymbol{f}}, \boldsymbol{g})$ under stochastic base policies in Theorem \ref{theo:pg}.

\begin{theorem}\label{theo:pg}
If each $f_i$ is stochastic (e.g., Gaussian policy), then at each episode $k$ of Alg. \ref{decom_alg}, the gradient of $J^R({\boldsymbol{f}}, \boldsymbol{g})$ w.r.t. $\theta_i, \forall{i}\in[N]$, is
\begin{align}\label{update_f:stochastic_main_paper}
    \nabla_{\theta_i}J^R({\boldsymbol{f}}, \boldsymbol{g}) \approx \mathbb{E}_{(s_0, \boldsymbol{b}, \boldsymbol{a})\sim \mathcal{B}}\Big[\nabla_{\theta_i}\log f_i(b_{i}|o_i)\;Q^{\eta_{k+1}}(s_0, \boldsymbol{a})+\nabla_{\theta_i}Q^{\eta_{k+1}}(s_0,  \boldsymbol{a})\Big].
\end{align}
\end{theorem}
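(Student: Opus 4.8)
The plan is to start from the stochastic policy gradient theorem applied to the joint policy $\boldsymbol{\pi}$ that DeCOM induces through Eq.~\eqref{policy}, and then specialize the score function to the decomposed structure. Since the team-average return is $J^R(\boldsymbol{f},\boldsymbol{g})=\mathbb{E}_{\boldsymbol{\pi},p,p_0}[\sum_{t}\gamma^t r_t]$ with $r_t$ the team-average reward whose action value the critic $Q^{\eta}$ approximates, the policy gradient theorem gives $\nabla_{\theta_i}J^R=\mathbb{E}_{s\sim\rho,\,\boldsymbol{a}\sim\boldsymbol{\pi}(\cdot|s)}[\nabla_{\theta_i}\log\boldsymbol{\pi}(\boldsymbol{a}|s)\,Q^{\boldsymbol{\pi}}(s,\boldsymbol{a})]$, where $\rho$ is the discounted state-visitation distribution and $Q^{\boldsymbol{\pi}}$ is the true team-average action value. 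The first step is to record this identity and then set up the replacements that turn the population quantities into the sample-based objects of Eq.~\eqref{update_f:stochastic_main_paper}: the learned critic $Q^{\eta_{k+1}}$ substitutes for $Q^{\boldsymbol{\pi}}$, the expectation over $\rho$ is replaced by averaging over the mini-batch $\mathcal{B}$, and the $\theta_i$-dependence of $\rho$ is dropped in the usual actor-critic fashion. Together these substitutions are what the symbol ``$\approx$'' in the statement is meant to absorb.

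The second step is to expand $\nabla_{\theta_i}\log\boldsymbol{\pi}(\boldsymbol{a}|s)$ by exploiting the two distinct channels through which $\theta_i$ shapes the sampled joint action. Because $\boldsymbol{a}$ is produced by first drawing $b_i\sim f_i(\cdot|o_i)$ and then applying the deterministic map $a_i=b_i+\lambda g_i(o_i,b_i,\boldsymbol{b}_{\mathcal{N}_i})$, the base action $b_i$ has a dual role: it is a stochastic latent variable whose law is controlled by $\theta_i$, and it is a deterministic input fed both into agent $i$'s own perturbation and, via base-action sharing, into the perturbation $g_n$ of every neighbor $n$ with $i\in\mathcal{N}_n$. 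I would isolate the first role through the likelihood-ratio (score-function) identity, producing the term $\nabla_{\theta_i}\log f_i(b_i|o_i)\,Q^{\eta_{k+1}}(s_0,\boldsymbol{a})$, and isolate the second role by reparametrizing $b_i=\mu_{\theta_i}(o_i)+\sigma_{\theta_i}(o_i)\,\epsilon$ for a Gaussian $f_i$ and differentiating the critic pathwise through $\boldsymbol{a}$, producing the term $\nabla_{\theta_i}Q^{\eta_{k+1}}(s_0,\boldsymbol{a})$. The cross-agent coupling inside this pathwise term is precisely the ``backward gradient flow'' from neighbors drawn in Fig.~\ref{fig:decom2}, so the chain rule must be tracked carefully to collect every contribution routed through the actions $a_n$ that depend on $b_i$.

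The hard part will be to justify that these two contributions are to be \emph{added} rather than being two equivalent estimators of one and the same gradient: a purely score-function derivation that treats the transform $\boldsymbol{a}(\boldsymbol{b})$ as a black box yields only the first term, while a purely reparametrized derivation yields only the second. The route I would take is to write the one-step surrogate as $\mathbb{E}_{\epsilon}[\,\omega(\theta_i)\,Q^{\eta_{k+1}}(s_0,\boldsymbol{a})\,]$ under the reparametrization, where $\omega$ carries the likelihood dependence while $\boldsymbol{a}$ carries the pathwise dependence, differentiate, and then identify $\nabla_{\theta_i}\log f_i$ as the score of the sampling density and $\nabla_{\theta_i}Q^{\eta_{k+1}}$ as the pathwise term, with the neglected higher-order coupling between the two channels being what the approximation quietly discards. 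I expect this bookkeeping, rather than any deep analytic step, to be the crux. Finally, I would observe that the whole argument is consistent with Theorem~\ref{representation_power}, since the gradient operates within the same policy class whose optimum already attains the optimal return of Problem~\eqref{original_problem}.
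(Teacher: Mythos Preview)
Your plan departs from the paper's proof at the very first step and never recovers the structure that makes the two summands in Eq.~\eqref{update_f:stochastic_main_paper} appear naturally. The paper does \emph{not} work with the state-visitation form $\mathbb{E}_{s\sim\rho,\,\boldsymbol{a}\sim\boldsymbol{\pi}}[\nabla_{\theta_i}\log\boldsymbol{\pi}(\boldsymbol{a}\mid s)\,Q(s,\boldsymbol{a})]$; it writes that form down, observes that evaluating $\nabla_{\theta_i}\log\boldsymbol{\pi}(\boldsymbol{a}\mid s)$ would require, for each sampled $\boldsymbol{a}$, searching over all $\boldsymbol{b}$ with $\boldsymbol{a}=\boldsymbol{b}+\lambda\boldsymbol{g}(s,\boldsymbol{b})$, and discards this route as intractable. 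Instead, it invokes Lemma~\ref{optimal_rep} (established inside the proof of Theorem~\ref{representation_power}) to express the objective at the initial state only,
\[
J^R(\boldsymbol{f},\boldsymbol{g})=\int p_0(s_0)\int \boldsymbol{f}(\boldsymbol{b}\mid s_0)\,Q^{\boldsymbol{\pi}}\bigl(s_0,\boldsymbol{b}+\lambda\boldsymbol{g}(s_0,\boldsymbol{b})\bigr)\,d\boldsymbol{b}\,ds_0,
\]
and then applies the ordinary product rule to the integrand $\boldsymbol{f}\cdot Q^{\boldsymbol{\pi}}$. The factor $\nabla_{\theta_i}\boldsymbol{f}\cdot Q^{\boldsymbol{\pi}}$ yields the score-function piece $\nabla_{\theta_i}\log f_i(b_i\mid o_i)\,Q$, while the factor $\boldsymbol{f}\cdot\nabla_{\theta_i}Q^{\boldsymbol{\pi}}$ is approximated by $\nabla_{\theta_i}Q^{\eta_{k+1}}(s_0,\boldsymbol{a})$ and subsequently unpacked, via reparametrization of the neighbor actions, as the gradient-sharing term.

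Thus the two pieces are not alternative estimators of one and the same quantity---they are the two halves of a single product rule applied to a product whose \emph{both} factors depend on $\theta_i$---and the ``hard part'' you flag simply does not arise in this framing. Your proposed fix through a weighted surrogate $\mathbb{E}_\epsilon[\omega(\theta_i)\,Q]$ is neither needed nor clearly well-defined, and your closing appeal to Theorem~\ref{representation_power} is a red herring for the gradient computation: what the derivation actually borrows from that theorem's proof is the initial-state representation of $J^R$ in Lemma~\ref{optimal_rep}, not the optimality statement itself.
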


Note that the term $\nabla_{\theta_i}Q^{\eta_{k+1}}(s_0,  \boldsymbol{a})$ in Eq. (\ref{update_f:stochastic_main_paper}) implicitly shows that agents do share gradients in the training process as discussed in Section \ref{sec:framework}. As DeCOM does not restrict $\boldsymbol{f}$ to be stochastic, we also derive the policy gradient under deterministic base policies, whose mathematical expression together with the proofs and more discussions on Theorem \ref{theo:pg} is presented in Appendix \ref{update_f}.



\subsubsection{Updating $\boldsymbol{\phi}$}\label{optimize_g}
In DeCOM, $\boldsymbol{g}$ perturbs the base action to satisfy constraints, whose parameter ${\boldsymbol{\phi}}$ solves the following constraints satisfaction problem:
\begin{equation}\label{constraints_satisfaction_prob}
\begin{aligned}
        \text{Find }\;\;\boldsymbol{\phi}\in\Phi,\;\;\text{s.t. }\;J^{C_j}(\boldsymbol{f}, \boldsymbol{g}) \leq D_j,\;\; \forall{j}\in[M],
\end{aligned}
\end{equation}
where $\Phi$ is the space of $\boldsymbol{\phi}$. As an exhaustive search for $\boldsymbol{\phi}$ that solves Problem (\ref{constraints_satisfaction_prob}) is intractable, we switch to a learning approach. Given any $\boldsymbol{\theta}$, for each $j\in[M]$, the constraint violation loss is defined as 
\begin{equation}\label{phi_loss:ideal}
\begin{aligned}
    \mathcal{L}_j(\boldsymbol{\phi}; \boldsymbol{\theta})=\Big(\max\big(0,\; J^{C_j}(\boldsymbol{f}, \boldsymbol{g}) - D_j\big)\Big)^2.
\end{aligned}  
\end{equation}

Given $\boldsymbol{\theta}_{k+1}$, we empirically approximate the above loss by
\begin{equation}\label{phi_loss:prac}
\begin{aligned}
    \Tilde{\mathcal{L}}_j(\boldsymbol{\phi}; \boldsymbol{\theta}_{k+1})=\Big(\max\big(0,\; \mathbb{E}_{(s_{0}, \boldsymbol{a})\sim\mathcal{B}}\big[Q^{\zeta_{j,k+1}}(s_{0}, \boldsymbol{a})\big] - D_j\big)\Big)^2.
\end{aligned}  
\end{equation}

As realized by the for loop (line \ref{policy_update:l5}-\ref{policy_update:l7}) in Alg. \ref{updating_alg}, we update $\boldsymbol{\phi}$ with $W$ iterations. This design is motivated by the convergence analysis presented in Section \ref{converge_analyze}, and practical settings of $W$ can be found in Appendix \ref{practical_implemetation}. In each iteration $w$ of Alg. \ref{updating_alg}, given the current value $\boldsymbol{\varphi}_w$ for the parameter $\boldsymbol{\phi}$, we find the cost $j^*\in[M]$ with the maximum empirical constraint violation loss $ \Tilde{\mathcal{L}}_j(\boldsymbol{\varphi}_w; \boldsymbol{\theta}_{k+1})$. Then, $\boldsymbol{\varphi}_w$ is updated to $\boldsymbol{\varphi}_{w+1}$ by projected gradient descent with the clipped version of the gradient $\nabla_{\boldsymbol{\phi}}\Tilde{\mathcal{L}}_{j^*}(\boldsymbol{\varphi}_w; \boldsymbol{\theta}_{k+1})$. That is,
\begin{equation}\label{varphi:empirical}
    \begin{aligned}
    \boldsymbol{\varphi}_{w+1}=\Gamma_{\Phi}\Big[\boldsymbol{\varphi}_w-\tau\cdot\text{Clip}\big(\nabla_{\boldsymbol{\phi}}\Tilde{\mathcal{L}}_{j^*}(\boldsymbol{\varphi}_w; \boldsymbol{\theta}_{k+1})\big)\Big],
 \end{aligned}
\end{equation}
where $\tau$ is the learning rate, $\Gamma_{\Phi}[\boldsymbol{\varphi}]$ projects $\boldsymbol{\varphi}$ into the space $\Phi$, and
\begin{equation}\label{clipped_grad}
\text{Clip}\big(\nabla_{\boldsymbol{\phi}}\Tilde{\mathcal{L}}_{j^*}(\boldsymbol{\varphi}_w; \boldsymbol{\theta}_{k+1})\big) =
\begin{cases}
  \nabla_{\boldsymbol{\phi}}\Tilde{\mathcal{L}}_{j^*}(\boldsymbol{\varphi}_w; \boldsymbol{\theta}_{k+1}), & \text{if}\;\;||\nabla_{\boldsymbol{\phi}}\Tilde{\mathcal{L}}_{j^*}(\boldsymbol{\varphi}_w; \boldsymbol{\theta}_{k+1})||\leq G{\color{red},}\\
  G\cdot\frac{\nabla_{\boldsymbol{\phi}}\Tilde{\mathcal{L}}_{j^*}(\boldsymbol{\varphi}_w; \boldsymbol{\theta}_{k+1})}{||\nabla_{\boldsymbol{\phi}}\Tilde{\mathcal{L}}_{j^*}(\boldsymbol{\varphi}_w; \boldsymbol{\theta}_{k+1})||}, & \text{otherwise},
\end{cases}
\end{equation}
with $G$ denoting the maximum allowable gradient norm. We adopt the above clipping operation to stabilize learning, which also helps Alg. \ref{updating_alg} converge, as shown in Section \ref{converge_analyze}. Furthermore, the estimation of constraint violation only considers the initial time step in Eq. (\ref{phi_loss:prac}) and (\ref{varphi:empirical}), in practice, we implement a more efficient estimation method which utilize backward value function \cite{pmlr-v119-satija20a} to assign the constraint violation to each time step. See Appendix \ref{practical_implemetation} for more details.

\subsubsection{Convergence Analysis}\label{converge_analyze}
Before formally stating Theorem \ref{main_theorem} on the convergence of Alg. \ref{updating_alg}, we introduce two mild assumptions\footnote{These assumptions are commonly adopted in existing works (e.g., \cite{tessler2018reward, ntk}).}, including that the space $\Phi$ is compact and convex, and that $\mathcal{L}_j(\boldsymbol{\phi}; \boldsymbol{\theta}_{k+1})$ is $L_j$-smooth 
	w.r.t. $\boldsymbol{\phi}$, $\forall{j}\in[M]$, with $L_{\max}$ denoting $\max\{L_1,\cdots,L_M\}$.

\begin{theorem}\label{main_theorem}
	Let $\boldsymbol{\phi}$ be updated with the exact constraint violation losses given $\boldsymbol{\theta}_{k+1}$. That is, in each iteration $w$ of Alg. \ref{updating_alg}, $\boldsymbol{\varphi}_{w+1}$ is set as $\Gamma_{\Phi}\big[\boldsymbol{\varphi}_w-\tau\cdot\textnormal{Clip}(\nabla_{\boldsymbol{\phi}}\mathcal{L}_{j^*}(\boldsymbol{\varphi}_w; \boldsymbol{\theta}_{k+1}))\big]$ with $j^*=\arg\max_{j\in[M]}\mathcal{L}_j(\boldsymbol{\varphi}_w; \boldsymbol{\theta}_{k+1})$. Then, for any $\epsilon>0$ and $j\in[M]$, if both $\tau L_{max}$ and $\tau G^2$ are sufficiently small, $\boldsymbol{\varphi}_w$ will converge in $H\leq \frac{\min_{\boldsymbol{\phi}\in\mathcal{X}}||\boldsymbol{\phi}_{k}-\boldsymbol{\phi}||^2}{2\tau\epsilon}$ steps to the region
	\begin{align*}
	C_k\leq\mathcal{L}_{j}(\boldsymbol{\phi}; \boldsymbol{\theta}_{k+1})\leq C_k+\frac{2\epsilon+\tau G^2}{2F(H)},
	\end{align*}
	where the set $\mathcal{X}=\arg\min_{\boldsymbol{\phi}\in\Phi}\mathcal{L}_{j^*}(\boldsymbol{\phi}; \boldsymbol{\theta}_{k+1})$, the value $C_k=\min_{\boldsymbol{\phi}\in\Phi}\mathcal{L}_{j^*}(\boldsymbol{\phi}; \boldsymbol{\theta}_{k+1})$, and $F(H)=\min\big(1,\frac{G}{||\nabla_{\boldsymbol{\phi}}\mathcal{L}_{j^*}(\boldsymbol{\varphi}_{{H}}; \boldsymbol{\theta}_{k+1})||}\big)$.
\end{theorem}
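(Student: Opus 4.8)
The plan is to track the squared distance $d_w=\|\boldsymbol{\varphi}_w-\boldsymbol{\phi}^*\|^2$ from the iterates to a fixed minimizer $\boldsymbol{\phi}^*\in\mathcal{X}$, chosen as the point of $\mathcal{X}$ closest to the starting value $\boldsymbol{\phi}_k=\boldsymbol{\varphi}_0$, and to show that $d_w$ must strictly decrease by a fixed amount on every iteration for which the loss is still far from $C_k$. First I would invoke the non-expansiveness of the Euclidean projection onto the compact convex set $\Phi$: since $\boldsymbol{\phi}^*\in\Phi$, the projected update obeys $\|\boldsymbol{\varphi}_{w+1}-\boldsymbol{\phi}^*\|^2\le\|\boldsymbol{\varphi}_w-\tau\,\boldsymbol{g}_w-\boldsymbol{\phi}^*\|^2$ with $\boldsymbol{g}_w=\text{Clip}\big(\nabla_{\boldsymbol{\phi}}\mathcal{L}_{j^*}(\boldsymbol{\varphi}_w)\big)$. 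Expanding the right-hand side gives $d_{w+1}\le d_w-2\tau\langle\boldsymbol{g}_w,\boldsymbol{\varphi}_w-\boldsymbol{\phi}^*\rangle+\tau^2\|\boldsymbol{g}_w\|^2$, and the clipping rule \eqref{clipped_grad} immediately yields $\|\boldsymbol{g}_w\|\le G$, so the last term is at most $\tau^2 G^2$.

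The second step is to lower-bound the inner product. Writing $\boldsymbol{g}_w=\alpha_w\,\nabla_{\boldsymbol{\phi}}\mathcal{L}_{j^*}(\boldsymbol{\varphi}_w)$ with clipping factor $\alpha_w=\min\big(1,G/\|\nabla_{\boldsymbol{\phi}}\mathcal{L}_{j^*}(\boldsymbol{\varphi}_w)\|\big)$, I would relate $\langle\nabla_{\boldsymbol{\phi}}\mathcal{L}_{j^*}(\boldsymbol{\varphi}_w),\boldsymbol{\varphi}_w-\boldsymbol{\phi}^*\rangle$ to the optimality gap $\mathcal{L}_{j^*}(\boldsymbol{\varphi}_w)-C_k$, so that $\langle\boldsymbol{g}_w,\boldsymbol{\varphi}_w-\boldsymbol{\phi}^*\rangle\ge\alpha_w\big(\mathcal{L}_{j^*}(\boldsymbol{\varphi}_w)-C_k\big)$. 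Substituting into the distance recursion gives $d_{w+1}\le d_w-2\tau\alpha_w\big(\mathcal{L}_{j^*}(\boldsymbol{\varphi}_w)-C_k\big)+\tau^2 G^2$. Consequently, whenever the clipped gap exceeds the threshold, i.e. $\alpha_w\big(\mathcal{L}_{j^*}(\boldsymbol{\varphi}_w)-C_k\big)>\epsilon+\tfrac{\tau G^2}{2}$, the quadratic term is absorbed and I obtain the strict decrease $d_{w+1}\le d_w-2\tau\epsilon$.

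The final step is a counting argument. Because $d_w\ge 0$ and $d_0=\|\boldsymbol{\phi}_k-\boldsymbol{\phi}^*\|^2=\min_{\boldsymbol{\phi}\in\mathcal{X}}\|\boldsymbol{\phi}_k-\boldsymbol{\phi}\|^2$, the number of iterations on which the decrease $2\tau\epsilon$ occurs is at most $H=\min_{\boldsymbol{\phi}\in\mathcal{X}}\|\boldsymbol{\phi}_k-\boldsymbol{\phi}\|^2/(2\tau\epsilon)$. Hence within $H$ steps there is an iterate violating the threshold, i.e. $\mathcal{L}_{j^*}(\boldsymbol{\varphi}_H)-C_k\le(2\epsilon+\tau G^2)/(2\alpha_H)$, and identifying $\alpha_H$ with $F(H)$ gives the claimed upper bound. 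The lower bound $C_k\le\mathcal{L}_{j^*}(\boldsymbol{\varphi}_H)$ is immediate from $C_k=\min_{\boldsymbol{\phi}\in\Phi}\mathcal{L}_{j^*}$, and since $j^*$ is the arg-max constraint, $\mathcal{L}_j(\boldsymbol{\varphi}_H)\le\mathcal{L}_{j^*}(\boldsymbol{\varphi}_H)$ extends the conclusion to every $j\in[M]$. In this accounting, $\tau G^2$ small keeps the width of the region close to $\epsilon$, while $\tau L_{\max}$ small is what renders the smoothness-based control of the per-step change compatible with the projected step.

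The hard part will be the second step: lower-bounding $\langle\nabla_{\boldsymbol{\phi}}\mathcal{L}_{j^*}(\boldsymbol{\varphi}_w),\boldsymbol{\varphi}_w-\boldsymbol{\phi}^*\rangle$ by the gap to the \emph{global} minimum value $C_k$ is precisely the convex subgradient inequality, whereas the stated hypotheses supply only $L_j$-smoothness. Converging to a neighborhood of the global-minimum value (rather than merely a stationary point) genuinely needs convexity or a gradient-dominance (Polyak--{\L}ojasiewicz) property, so the crux is to justify such an inequality for the squared constraint-violation losses $\mathcal{L}_j=(\max(0,\cdot))^2$ and to bound the residual error incurred when $\tau L_{\max}$ is merely small rather than zero. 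A secondary subtlety is that the selected index $j^*$ may change across iterations, so care is needed to run the distance-telescoping against the minimizer set $\mathcal{X}$ of the constraint that is ultimately dominant.
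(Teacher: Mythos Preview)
Your overall architecture---projection non-expansiveness, the clipped-gradient factor $\alpha_w=F(w)$, the telescoping of $d_w=\|\boldsymbol{\varphi}_w-\boldsymbol{\phi}^*\|^2$, and the counting argument giving the bound on $H$---matches the paper's proof essentially step for step. The one place you diverge is precisely the spot you flag as the ``hard part,'' and there you have misdiagnosed how the paper closes it.

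You propose to bound $\langle\nabla_{\boldsymbol{\phi}}\mathcal{L}_{j^*}(\boldsymbol{\varphi}_w),\boldsymbol{\varphi}_w-\boldsymbol{\phi}^*\rangle\ge \mathcal{L}_{j^*}(\boldsymbol{\varphi}_w)-C_k$ via convexity, and then worry that only smoothness is assumed. The paper does not invoke convexity here; it uses the \emph{lower} half of the $L$-smoothness sandwich,
\[
\mathcal{L}_{j^*}(\boldsymbol{\phi}^*)\ \ge\ \mathcal{L}_{j^*}(\boldsymbol{\varphi}_w)+\nabla_{\boldsymbol{\phi}}\mathcal{L}_{j^*}(\boldsymbol{\varphi}_w)^{\!\top}(\boldsymbol{\phi}^*-\boldsymbol{\varphi}_w)-\tfrac{L_{j^*}}{2}\|\boldsymbol{\varphi}_w-\boldsymbol{\phi}^*\|^2,
\]
which yields the same inner-product bound you want \emph{minus} an extra $\tfrac{L_{j^*}}{2}d_w$ term. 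When this is plugged into your distance recursion the effect is to replace the coefficient $1$ on $d_w$ by $(1+\tau F(w)L_{j^*})\le(1+\tau L_{\max})$. \emph{That} is where the hypothesis ``$\tau L_{\max}$ sufficiently small'' enters: it makes this multiplicative blow-up negligible so the telescoping goes through as if convexity held. So you do not need convexity or a PL condition; smoothness alone, paired with small $\tau L_{\max}$, supplies the surrogate inequality. Your last paragraph's guess about the role of $\tau L_{\max}$ is therefore close but not quite right---it is not controlling a residual error in a separate step, it is literally the price of replacing convexity by smoothness in the inner-product bound.

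Your secondary concern about $j^*$ changing across iterations is legitimate; the paper's proof keeps $\boldsymbol{\varphi}^*$ fixed and does not explicitly reconcile this with the per-step $\arg\max$ selection, so do not expect to find a clean resolution there.
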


Theorem \ref{main_theorem} states that under mild conditions, Alg. \ref{updating_alg} converges within limited iterations. Specifically, if Problem (\ref{constraints_satisfaction_prob}) is feasible, which happens when constraint bounds $D_j$ are set appropriately, or the parameterization space $\Phi$ is large enough, then there exists $\boldsymbol{\phi}\in\Phi$ such that $\mathcal{L}_j(\boldsymbol{\phi}; \boldsymbol{\theta}_{k+1})=0, \forall{j}\in[M]$ and $C_k=0$. In this case, $\boldsymbol{\varphi}_w$ will converge to an approximately feasible solution of Problem (\ref{constraints_satisfaction_prob}) with a maximum constraint violation of $\sqrt{\frac{2\epsilon+\tau G^2}{2F(H)}}$. Theorem \ref{main_theorem} motivates us to set small $\tau$ and $G$, and use sufficient number of iterations to update $\boldsymbol{\varphi}_w$ in practice. See Appendix \ref{detail_proof:main_theorem} and \ref{practical_implemetation} for the proof of Theorem \ref{main_theorem} and detailed hyper-parameter settings, respectively.




	\section{Experiments}
	\subsection{Simulation Environments and Costs}\label{env:intro}
To evaluate DeCOM, we construct four simulation environments, namely \textit{CTC-safe} and \textit{CTC-fair} which extend the cooperative treasure collection (CTC) environment \cite{MAAC}, as well as \textit{constrained directional sensor network (CDSN)} and \textit{constrained large-scale fleet management (CLFM)}.

\begin{figure}[H]
	\centering
	\subfigure{
		\begin{minipage}[t]{0.45\linewidth}
			\centering
			\includegraphics[width=2.2in]{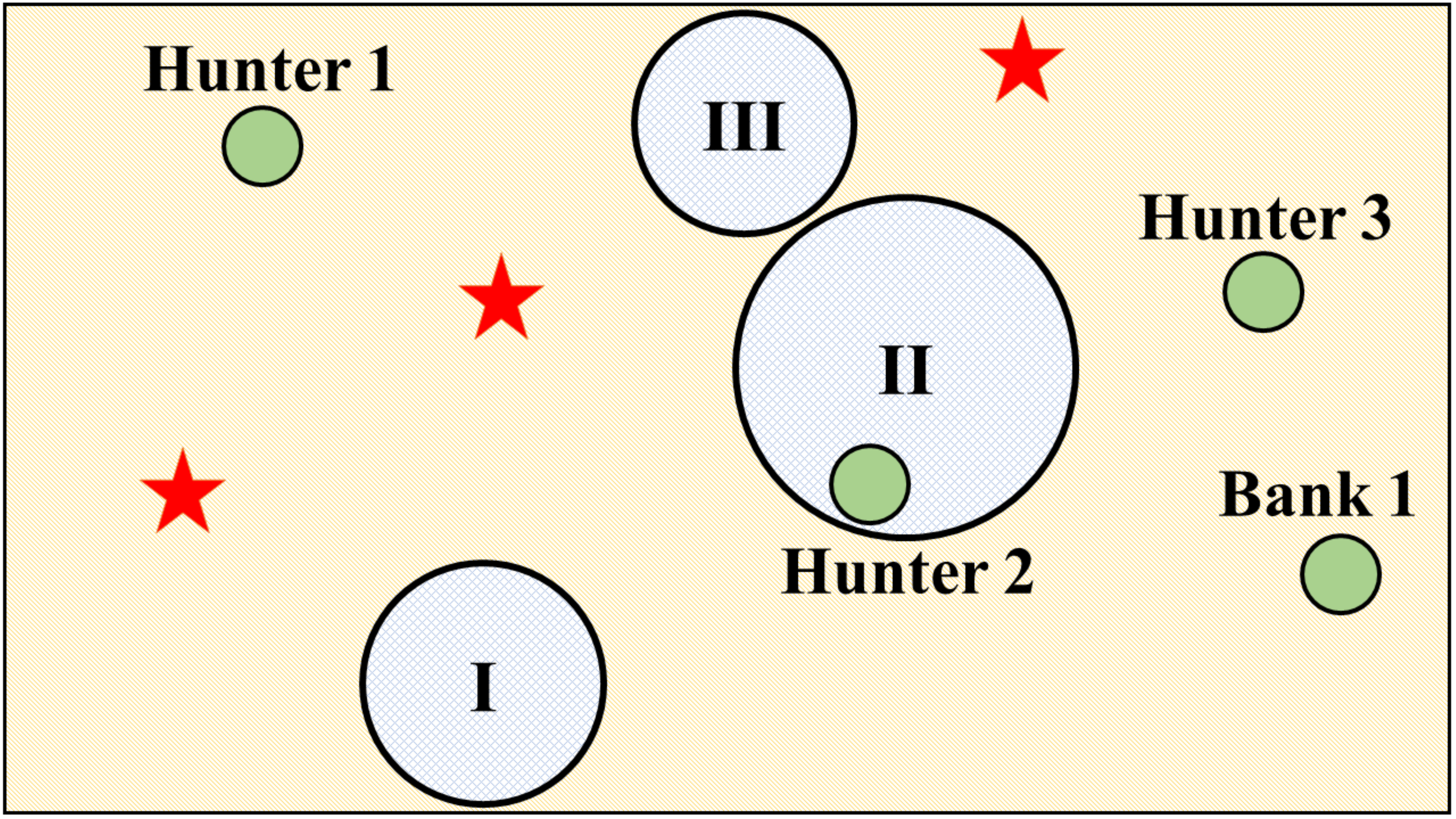}
			\caption*{CTC-safe.}
			\label{fig:ctc-safe}
	\end{minipage}}
	\subfigure{
		\begin{minipage}[t]{0.45\linewidth}
			\centering
			\includegraphics[width=2.2in]{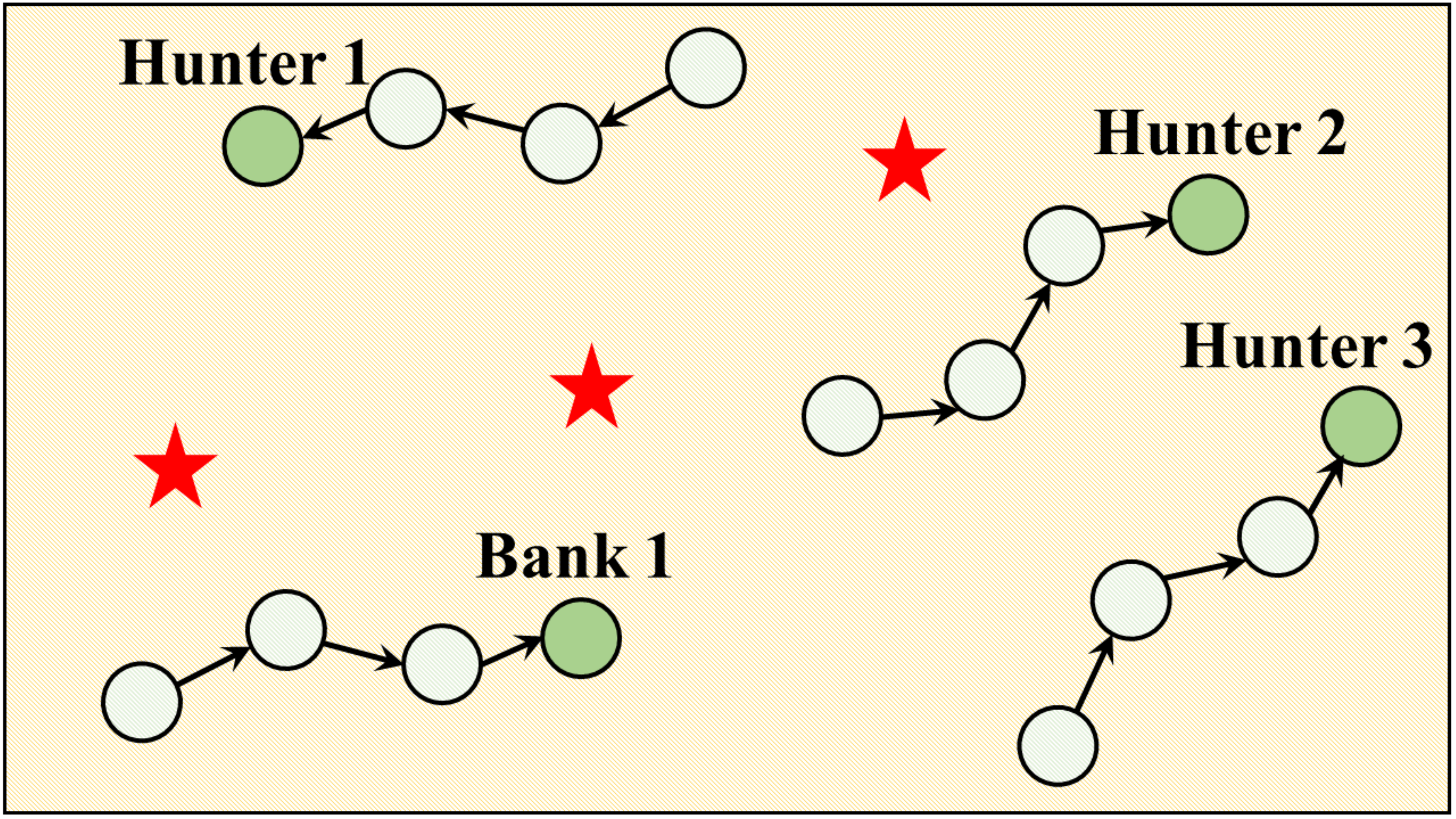}
			\caption*{CTC-fair.}
			\label{fig:ctc-fair}
	\end{minipage}}
	\subfigure{
		\begin{minipage}[t]{0.45\linewidth}
			\centering
			\includegraphics[width=2.2in]{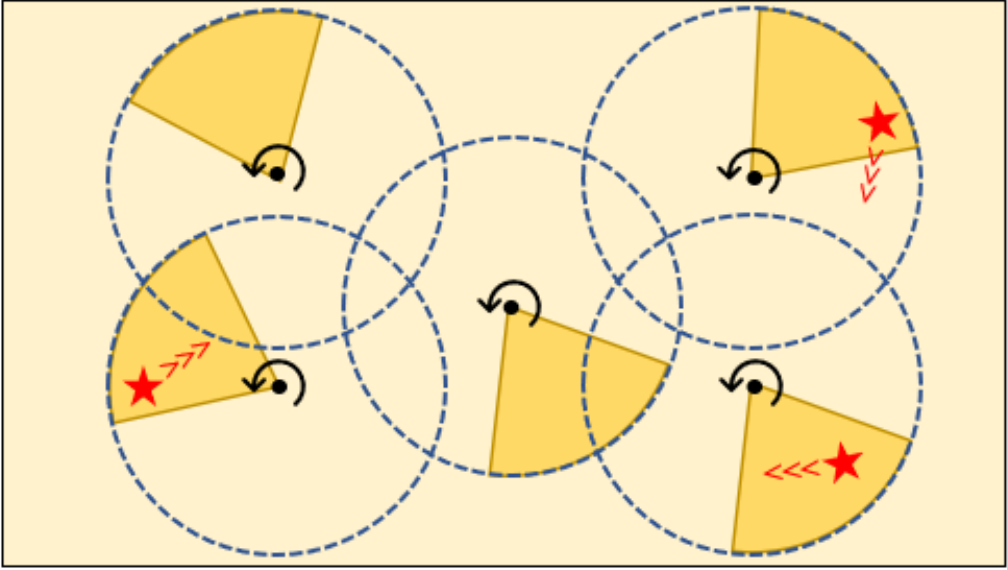}
			\caption*{CDSN.}
			\label{fig:cdsn}
		\end{minipage}}
	\subfigure{
		\begin{minipage}[t]{0.45\linewidth}
			\centering
			\includegraphics[width=2.2in]{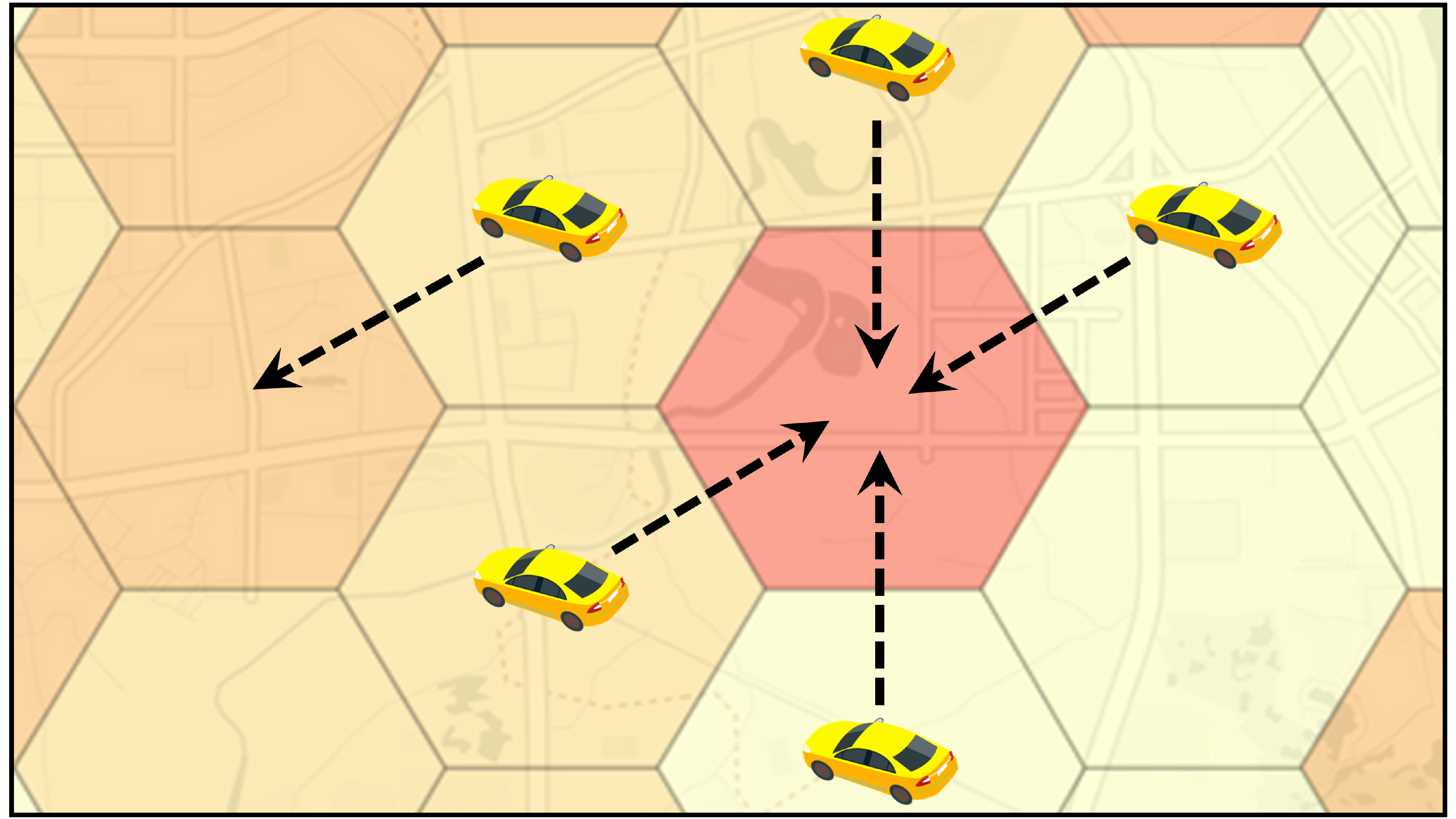}
			\caption*{CLFM.}
			\label{fig:clfm}
	\end{minipage}}
	\vskip -5pt
	\caption{Simulation environments, where stars represent treasures, areas I-III are unsafe regions 1-3 in Fig. \ref{fig:ctc-safe}, Fig. \ref{fig:ctc-fair} shows agents' trajectories. Black dots refer to deployed sensors and stars are moving objects to capture in Fig. \ref{fig:cdsn}. The dashed arrows represent repositioning decisions and darker grids have larger demand-supply gaps in Fig. \ref{fig:clfm}.}
	\vskip -2pt
\end{figure}


As in CTC, both CTC-safe and CTC-fair have two types of agents, namely \textit{hunters} and \textit{banks}. Hunters collect treasures and store them into banks, and treasures will get re-spawned randomly once collected. The action of an agent is to select a coordinate within a square box where it will reposition at the next time step. Each agent's reward is positively correlated with the amount of treasures stored in banks, and a hunter will be punished, if it collides with another hunter. Both CTC-safe and CTC-fair are instantiated with 3 hunters and 1 bank in our experiments.

CTC-safe adds $3$ randomly initialized unsafe regions into CTC, as shown in Fig. \ref{fig:ctc-safe}. Each unsafe region generates one type of cost, and each agent receives 1 for \textit{unsafety} cost $j$, if it locates in unsafe region $j$. Each agent in CTC-fair receives an \textit{unfairness} cost which equals to the maximal difference between agents' accumulated traveling distance.

CDSN extends the DSN environment \cite{xu2020learning} to continuous action space. In CDSN, sensors adjust their directions to capture moving objects, as shown in Fig. \ref{fig:cdsn}. Each agent receives two immediate reward: individual reward counts the number of objects it captured, shared global reward calculates the ratio of all captured objects. Furthermore, each agent also receives an operational cost positively related to the angle adjusted. The ultimate goal is to maxmize the accumulated combined reward, while satisfies the constraint over the average accumulated cost.

CLFM treats vehicles of an online-hailing platform\footnote{Examples include Didi Chuxing (http://www.didichuxing.com/en/) and Uber (https://www.uber.com/).} as agents, and focuses on relocating them in a distributed way, so as to maximize the revenue of the whole system under constraints on city-wide demand-supply gap and unfairness among drivers' incomes. More specifically, each agent receives a \textit{demand-supply gap} cost that equals to the KL-divergence between the idle vehicle and order distributions, as well as an \textit{unfairness} cost defined as the squared difference between agents' average accumulated income and its own. CLFM is built with a public city-scale dataset\footnote{Data source: DiDi Chuxing GAIA Open Dataset Initiative (https://gaia.didichuxing.com).} that contains approximate 1 million orders from November 1 to November 30, 2016 in Chengdu, China. In our simulation, we consider 500 vehicles and divide the urban area of Chengdu into 103 equal-size hexagon grids. An agent's action is to choose a weight vector locally, which is multiplied with the feature vector of each candidate grid to obtain a score. Then, the grid where the agent will reposition is sampled based on grids' scores. More detailed settings can be found in Appendix \ref{exp_details}.


\subsection{Algorithms and Neural Network Structures}
\begin{itemize}[leftmargin=*]
	\item \textbf{Fixed Penalty (FP)}. FP treats costs as penalties by adding to the reward. Each type of cost is multiplied with an identical weight chosen from the set $\mathcal{W}=\{0, -0.1, -1.0, -100\}$. We let FP-$|\omega|$ denote FP with weight $\omega\in\mathcal{W}$.     
	\item \textbf{Lagrangian (La)}. La extends RCPO \cite{tessler2018reward} to CCMG by replacing the single-agent reward (costs) with the team-average reward (costs) and single-agent policy with agents' joint policy.
	\item \textbf{Nocomm DeCOM (DeCOM-N)}. DeCOM-N is a variant of DeCOM where agents do not share base actions. That is, $a_i=b_i+\lambda\;g_{i}(o_i, b_i),\;\text{with }b_i\sim f_{i}(\cdot|o_i), \forall{i}\in[N]$.
	\item \textbf{Independent DeCOM (DeCOM-I)}. DeCOM-I is a variant of DeCOM, where the input of each $g_{i}$ is only $o_i$. That is, $a_i=b_i+\lambda\;g_{i}(o_i),\;\text{with }b_i\sim f_{i}(\cdot|o_i), \forall{i}\in[N]$.
	\item \textbf{DeCOM-A}. We denote the original DeCOM which retains all communication as DeCOM-A to align with its variants. 
\end{itemize}

Note that FP-0 is exactly the unconstrained MARL algorithm that aims to maximize the expected team-average return without any constraints. Based on the performance of FP-0, we select the neural network structures as follows: in CTC-safe and CTC-fair, we set $\boldsymbol{f}$ as deterministic and use MADDPG critics \cite{10.5555/3295222.3295385}; in CDSN, we use stochastic $\boldsymbol{f}$, and use MADDPG critics; in CLFM, we use stochastic $\boldsymbol{f}$, and Mean-Field critics \cite{mean_field}. We set $\lambda$ in CTC-safe, CDSN and CLFM to 1, and 0.01 in CTC-fair. See more discussion about chooing $\lambda$ in Appendix \ref{exp_details}, together with all training curves.

\subsection{Results Comparison}\label{exp:results_comparison}
Table \ref{ctc-table} shows the test results of CTC-safe and CTC-fair. In CTC-safe, the constraint bound for each unsafe region is set as $0.6$, $0.8$ and $1.0$. Among all algorithms, FP-0 achieves the highest reward, but violates all constraints.  Although DeCOM-A and its variants DeCOM-I and DeCOM-N do not have as high reward as FP-0, but they satisfy all constraints. La has the worst performance on all constraints.

In CTC-fair, the constraint bound for unfairness is 0. Test results in Table \ref{ctc-table} show that no algorithms satisfy the constraints. DeCOM-I and DeCOM-A have relatively low violation on unfairness, FP-0 and DeCOM-N have relatively high reward. Interestingly, DeCOM-A performs slightly better in CTC-fair comparing to CTC-safe. This phenomenon is highly related to the fact that ensuring fairness typically requires more agent interactions than safety. More specifically, an agent can avoid unsafe regions by its own observation, even without others' information. However, agent communication in DeCOM-A becomes more beneficial to help an agent control its traveling distance for fairness. However even with this promising observation, DeCOM-A still violates the constraint heavily. Achieving a better trade-off between reward and unfairness remains an interesting problem to explore in the future. 


\begin{table}
	\caption{CTC-safe and CTC-fair test results ($\text{average}\pm\text{standard deviation}$).}
	\label{ctc-table}
	\centering
	\resizebox{\textwidth}{!}{%
		\begin{tabular}{c|cccc|cc}
			\hline
			\multirow{2}{*}{\textbf{Algorithms}}   & \multicolumn{4}{c|}{\textbf{CTC-safe}}                                                                                      & \multicolumn{2}{c}{\textbf{CTC-fair}}                                                           \\ \cline{2-7}
			& \textbf{Reward} & \textbf{Unsafety 1} & \textbf{Unsafety 2} & \textbf{Unsafety 3} & \textbf{Reward} & \textbf{Unfairness} \\\hline
			\textbf{FP-0}   & 1.67   $\pm$ 1.08  & 0.82              $\pm$ 0.20 & 1.41              $\pm$ 0.20 & 2.02       $\pm$ 0.47 & 4.46	 $\pm$ 0.54	 & 8.91	  $\pm$ 1.03      \\
			\hline
			\textbf{FP-0.1} & -1.42           $\pm$ 0.86  & 0.75              $\pm$ 0.08 & 1.37              $\pm$ 0.13 & 2.16       $\pm$ 0.24  & 3.42	 $\pm$ 0.78	 & 9.96	  $\pm$ 1.65     \\
			\textbf{FP-1.0} & -1.74           $\pm$ 0.98  & 0.82              $\pm$ 0.61 & 1.46              $\pm$ 1.05 & 2.19       $\pm$ 1.44  & 0.00	 $\pm$ 0.25	 & 9.83	  $\pm$ 2.10     \\
			\textbf{FP-100} & -1.95           $\pm$ 0.95  & 0.61              $\pm$ 0.37 & 0.82           $\pm$ 0.65 & 1.45          $\pm$ 0.96   & -1.21	 $\pm$ 0.16	 & 11.99  $\pm$ 	4.38    \\
			\textbf{La}     & -1.52           $\pm$ 0.24  & 1.05              $\pm$ 0.35 & 1.47           $\pm$ 0.59 & 2.38          $\pm$ 0.69  & -1.41	 $\pm$ 0.08	 & 8.99	  $\pm$ 3.64     \\
			\textbf{DeCOM-I}     & \textbf{-0.74}           $\pm$ 0.45  & \textbf{0.38}     $\pm$ 0.09 & \textbf{0.44}  $\pm$ 0.27 & \textbf{0.65} $\pm$ 0.11  & 3.37	 $\pm$ 0.60	 & \textbf{8.05}	  $\pm$ 0.20     \\
			\textbf{DeCOM-N}     & -1.20           $\pm$ 0.06  & \textbf{0.43}     $\pm$ 0.28 & \textbf{0.41}  $\pm$ 0.26 & \textbf{0.75} $\pm$ 0.26   & \textbf{4.36}	 $\pm$ 0.65	 & 9.22	  $\pm$ 0.78    \\
			\textbf{DeCOM-A}  & -1.24           $\pm$ 0.25  & \textbf{0.43}     $\pm$ 0.11 & \textbf{0.45}  $\pm$ 0.19 & \textbf{0.64} $\pm$ 0.20   & 3.76	 $\pm$ 0.84	 & \textbf{8.11}	  $\pm$ 0.08    \\ \hline
		\end{tabular}%
	}
\end{table}

\begin{table}[]
	\caption{CDSN test results ($\text{average}\pm\text{standard deviation}$).}
	\label{cdsn_table}
	\centering
		\begin{tabular}{c|cccc}
			\hline
			\textbf{Algorithms}       & \textbf{Reward} & \textbf{Num. of Captured} & \textbf{Coverage Ratio} & \textbf{Operational Cost}  \\\hline
			\textbf{FP-0}   & 62.26  $\pm$ 4.73      & 22.23         $\pm$  1.59      & 0.40      $\pm$  0.03      & 69.65          $\pm$ 2.77      \\\hline
			\textbf{FP-0.1} & 50.96           $\pm$ 5.24      & 18.19                  $\pm$  1.56      & 0.32               $\pm$  0.04      & 61.10          $\pm$ 7.23      \\
			\textbf{FP-1.0} & 51.14           $\pm$ 7.32      & 18.24                  $\pm$  2.71      & 0.33               $\pm$  0.05      & 60.79          $\pm$ 6.35      \\
			\textbf{FP-100} & 47.72           $\pm$ 1.82      & 17.00                 $\pm$  0.60      & 0.30               $\pm$  0.01      & 63.49          $\pm$ 16.56     \\
			\textbf{Lang}   & 46.79           $\pm$ 0.47      & 16.71                  $\pm$  0.29      & 0.30               $\pm$  0.00      & 80.82          $\pm$ 23.01     \\
			\textbf{DeCOM-I}    & 34.67           $\pm$ 25.58     & 19.15                  $\pm$ 8.53      & 0.32               $\pm$  0.15     & 58.19          $\pm$ 15.41     \\
			\textbf{DeCOM-N}     & 51.93           $\pm$ 24.05     & 13.75                  $\pm$  9.79      & 0.21               $\pm$  0.16     & \textbf{32.88} $\pm$ 4.25      \\
			\textbf{DeCOM-A}  & \textbf{58.41}           $\pm$ 11.21     & \textbf{21.05}                  $\pm$  3.75      & \textbf{0.37}               $\pm$  0.10      & 58.90          $\pm$ 26.23    \\\hline
		\end{tabular}%
\end{table}

\begin{table}
	\caption{CLFM test results ($\text{average}\pm\text{standard deviation}$).}
	
	\label{clfm-table}
	\centering
	\begin{tabular}{c|cccc}
		\hline
		\textbf{Algorithms}   & \textbf{Revenue}   & \textbf{ORR}  & \textbf{Demand-Supply Gap} & \textbf{Unfairness}  \\\hline
		\textbf{FP-0}  & 18944.83          $\pm$ 48.91     & 0.49          $\pm$ 0.07      & 100.29         $\pm$ 1.80      & 74.57             $\pm$ 3.03      \\\hline
		\textbf{FP-0.1}  & 18809.87          $\pm$ 41.58     & 0.48          $\pm$ 0.03      & 103.25         $\pm$ 1.15      & 81.08             $\pm$ 3.31      \\
		\textbf{FP-1.0}  & 18815.76          $\pm$ 66.79     & 0.48          $\pm$ 0.08      & 103.10         $\pm$ 1.46      & 78.98             $\pm$ 2.13      \\
		\textbf{FP-100}  & 18840.14          $\pm$ 55.99     & 0.49          $\pm$ 0.10      & 100.52         $\pm$ 1.94      & 81.05             $\pm$ 3.78      \\
		\textbf{La}  & 18819.07          $\pm$ 235.92    & 0.49          $\pm$ 0.25      & 103.22         $\pm$ 4.19      & 82.54             $\pm$ 14.61     \\
		\textbf{DeCOM-I} & 19004.70          $\pm$ 76.88     & 0.49          $\pm$ 0.09      & 97.33          $\pm$ 1.28      & 70.40             $\pm$ 2.48      \\
		\textbf{DeCOM-N}     & 18668.26          $\pm$ 362.78    & 0.48          $\pm$ 0.37      & 105.57         $\pm$ 5.06      & 89.61             $\pm$ 20.29     \\
		\textbf{DeCOM-A}       & \textbf{19286.68} $\pm$ 78.42     & \textbf{0.50} $\pm$ 0.14      & \textbf{89.11} $\pm$ 3.53      & \textbf{62.99}    $\pm$ 1.35     \\\hline
	\end{tabular}
\end{table}

Table \ref{cdsn_table} lists the test results in CDSN. FP-0 and DeCOM-A achieve the highest reward, global coverage ratio and the number of captured objects. The constraint bound on the operational cost is set as 20. DeCOM-N has the lowest constraint violation. La has the worst constraint violation, as shown by its training curve in Fig. \ref{train:cdsn}.

Table \ref{clfm-table} shows the test results in CLFM, where the important metric \textit{Order Response Rate (ORR)} that measures the ratio of served orders is also given. The constraint bound for the demand-supply gap and unfairness cost is set as $90$ and $60$, respectively. As shown in Table \ref{clfm-table}, DeCOM-A satisfies both constraints. Specifically, DeCOM-A has the lowest demand-supply gap, which makes it reasonable for DeCOM-A to achieve the highest revenue and ORR. Meanwhile, DeCOM-A has the lowest constraint violation on unfairness. DeCOM’s better performance versus other baselines in CLFM comes from its communication mechanism, which essentially provides an agent the repositioning intentions of its neighbors. As we define an agent’s neighbors as those in its neighboring grids, such information could help an agent decide to reposition to grids with less vehicles and more orders.


	\section{Related Works}
	\textbf{Multi-Agent Reinforcement Learning.} MARL is widely used to solve Markov games \cite{littman1994markov,zhang2020robust}, which can be categorized into competitive settings \cite{foerster2017learning,xie2020learning}, cooperative settings \cite{lin2019communication,bohmer2020deep}, and a mixture of them \cite{10.5555/3295222.3295385,bai2021sample}. As aforementioned, we focus on the cooperative setting in this paper. A series of recent MARL works for such settings, ranging from VDN \cite{sunehag2017value} to QMIX \cite{rashid2018qmix,rashid2020weighted}, adopt value-based methods that learn each agent's individual $Q$ function to represent the global $Q$ function by different mixing networks. Although these methods achieve good results for discrete action Markov games \cite{samvelyan2019starcraft}, they are generally not applied in our continuous action setting. DeCOM lies in the line of policy-based MARL methods, including MADDPG \cite{10.5555/3295222.3295385}, mean-field based method \cite{mean_field}, COMA \cite{Foerster2018CounterfactualMP}, MAAC \cite{MAAC}, and DOP \cite{DOP}. However, these methods are designed to solve unconstrained Markov games, which are thus not applicable in our constrained setting.

A family of MARL frameworks exploit \textit{communication} \cite{kim2018learning,ding2020learning,foerster2016learning,wang2020learning} by sharing either (encoded) observations or latent variables among agents. Different from them, the messages in DeCOM are agents' base actions, which have a clear and explicit meaning. Besides, DeCOM's base action sharing mechanism allows an agent to concurrently receive other agents' base actions and the gradient backflows at the current step, which brings more timeliness to help agents make better decisions.

\textbf{Constrained Reinforcement Learning.} A wide variety of constrained reinforcement learning frameworks are proposed to solve constrained MDPs (CMDPs) \cite{altman1999constrained}. They either convert a CMDP into an unconstrained min-max problem by introducing Lagrangian multipliers \cite{tessler2018reward, pmlr-v97-le19a, miryoosefi2019reinforcement, paternain2019constrained, turchetta2021safe, calian2020balancing,Chow2015RiskSensitiveAR}, or seek to obtain the optimal policy by directly solving constrained optimization problems \cite{achiam2017constrained, yang2020projection, zhang2020order, yu2019convergent, CEM, pmlr-v119-satija20a, ly1, ly2}. However, it is hard to scale these single-agent methods to our multi-agent setting due to computational inefficiency. 

Similar to DeCOM, one line of prior works \cite{luo2020multirobot,qin2021learning, Lu2020DecentralizedPG} also develop constrained MARL frameworks. However, \cite{luo2020multirobot,qin2021learning} focus on designing model-based control method to avoid collisions, which is not applicable to our scenario with an unknown state transition kernel; \cite{Lu2020DecentralizedPG} studies the scenario where each agent has a local safety constraint, whereas DeCOM is designed for cooperative settings with constraints on team-average costs. Furthermore, different from the aforementioned works \cite{luo2020multirobot,qin2021learning, Lu2020DecentralizedPG} that focus on specific applications, DeCOM could be applied in a wider range of applications that rely on intensive team cooperation, such as fleet management \cite{efficient_fm, coride}, order dispatch \cite{10.1145/3219819.3219824, li2019efficient}, multi-agent patrolling \cite{10.5555/3327757.3327905}, and target coverage in directional sensor networks \cite{xu2020learning}.

	
	\section{Conclusions and Discussions}\label{conclusion}
	In this paper, we propose a novel constrained cooperative MARL framework, named DeCOM, which facilitates agent cooperation by empowering information sharing among agents. By iteratively solving the unconstrained optimization problem on reward and the constrains satisfaction problem on costs, DeCOM learns policies in a scalable, efficient, and easy-to-implement manner. Experiment results in four simulation environments, including CTC-safe, CTC-fair, CDSN and CLFM, validate DeCOM's effectiveness.
	
	
	
	\newpage
	\printbibliography
	
	\newpage
	\section{Proofs}

\subsection{Proof of Theorem \ref{representation_power}}\label{detail_proof:representation_power}
We restate the theorem \ref{representation_power} as follows.
\begin{theorem*}
Let $\Psi_{\textnormal{DeCOM}}$ contain all possible joint policies representable by DeCOM, and $\boldsymbol{\pi}^{*}\in\Psi$ be the optimal solution to Problem (\ref{original_problem}). Then, for the optimal joint policy $\boldsymbol{\pi}^+\in\Psi_{\textnormal{DeCOM}}$, we have $J^R(\boldsymbol{\pi}^+)=J^R(\boldsymbol{\pi}^{*})$ and $J^{C_j}(\boldsymbol{\pi}^+)=J^{C_j}(\boldsymbol{\pi}^{*})$, $\forall{j}\in[M]$.
\end{theorem*}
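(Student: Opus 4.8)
The plan is to prove the two equalities by showing that the DeCOM policy class and the joint policy class $\Psi$ realize exactly the same achievable tuples $(J^R, J^{C_1}, \ldots, J^{C_M})$, so that the constrained optimization in Problem (\ref{original_problem}) has the same optimum whether taken over $\Psi$ or over $\Psi_{\textnormal{DeCOM}}$. I would obtain this by a sandwiching argument: one inclusion gives $J^R(\boldsymbol{\pi}^+) \ge J^R(\boldsymbol{\pi}^*)$, the other gives $J^R(\boldsymbol{\pi}^+) \le J^R(\boldsymbol{\pi}^*)$, and the construction underlying the first inclusion simultaneously pins down the costs.

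For the lower bound I would first show $\boldsymbol{\pi}^* \in \Psi_{\textnormal{DeCOM}}$ by an explicit decomposition: set each base policy $f_i = \pi_i^*$ and pick the perturbation policy so that $\lambda g_i(o_i, b_i, \boldsymbol{b}_{\mathcal{N}_i}) \equiv 0$ (for instance $g_i \equiv 0$). By Eq. (\ref{policy}) this yields $a_i = b_i \sim \pi_i^*(\cdot\,|\,o_i)$, so the induced policy coincides with $\boldsymbol{\pi}^*$ pointwise and therefore produces the identical state-action occupancy measure under $p$ and $p_0$. Since $J^R$ and each $J^{C_j}$ in Eq. (\ref{reward:expected_return})--(\ref{cost:expected_return}) are linear functionals of this occupancy measure, the constructed DeCOM policy attains $J^R(\boldsymbol{\pi}^*)$ and $J^{C_j}(\boldsymbol{\pi}^*)$; in particular it is feasible, so the DeCOM-optimal $\boldsymbol{\pi}^+$ satisfies $J^R(\boldsymbol{\pi}^+) \ge J^R(\boldsymbol{\pi}^*)$. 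For the upper bound I would argue that every DeCOM policy is itself a valid joint policy, i.e. $\Psi_{\textnormal{DeCOM}} \subseteq \Psi$, so that each feasible DeCOM policy is feasible for Problem (\ref{original_problem}) and cannot beat $\boldsymbol{\pi}^*$, giving $J^R(\boldsymbol{\pi}^+) \le J^R(\boldsymbol{\pi}^*)$. Combining yields the return equality; and since the zero-perturbation policy above is both feasible and return-optimal within $\Psi_{\textnormal{DeCOM}}$, it is a valid realization of $\boldsymbol{\pi}^+$ whose occupancy measure equals that of $\boldsymbol{\pi}^*$, so $J^{C_j}(\boldsymbol{\pi}^+) = J^{C_j}(\boldsymbol{\pi}^*)$ for all $j \in [M]$.

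The step I expect to be the main obstacle is justifying the reverse inclusion $\Psi_{\textnormal{DeCOM}} \subseteq \Psi$ at the level that actually matters for the objective. The subtlety is that the scaled perturbation $\lambda g_i$ makes each $a_i$ depend on neighbors' base actions $\boldsymbol{b}_{\mathcal{N}_i}$, so the DeCOM-induced joint action distribution need not factorize as $\prod_i \pi_i(a_i\,|\,o_i)$ and generally carries cross-agent correlations. The clean way to resolve this is the reparameterization view that motivates keeping $g_i$ deterministic: with independent base noise $b_i \sim f_i(\cdot\,|\,o_i)$, the deterministic map $\boldsymbol{b} \mapsto \boldsymbol{a}$ is a measurable pushforward that, for each global state $s$, defines a well-posed distribution over $\boldsymbol{a}$ --- hence a bona fide joint policy whose occupancy measure and thus $J^R, J^{C_j}$ are well defined. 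I would verify that these per-state pushforward distributions are precisely the objects on which the occupancy measure (and therefore the return and costs) depend, so that representability at the level of per-state action distributions transfers to equality of returns and costs; the same deterministic-$g$ device is what gives DeCOM the flexibility to match an arbitrary target joint distribution, which is ultimately why the decomposition incurs no loss of representation power.
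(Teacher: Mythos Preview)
Your proposal is correct and takes a more elementary route than the paper. The paper's proof establishes a lemma showing that for \emph{any} $\boldsymbol{\pi}\in\Psi$ one can exhibit $f$ and $\boldsymbol{g}$ with the same state--action occupancy measure; this is done via a chain of conditional-probability manipulations that introduce $\boldsymbol{b}$ as a latent variable, then identify $P(\boldsymbol{b}\mid s_0)$ with $f$ and use determinism of $\boldsymbol{g}$ to collapse the integral over $\boldsymbol{a}$. Optimality of $\boldsymbol{\pi}^+$ is then argued by contradiction, implicitly relying on $\Psi_{\textnormal{DeCOM}}\subseteq\Psi$ just as you do. Your zero-perturbation embedding ($f_i=\pi_i^*$, $g_i\equiv 0$) sidesteps the lemma entirely: since $\boldsymbol{\pi}^*$ is already a product of local policies $\pi_i^*(\cdot\mid o_i)$, matching it requires no nontrivial $\boldsymbol{g}$ at all, and the occupancy-measure equality is immediate. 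This is cleaner and more transparent.

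What the paper's approach buys is a slightly stronger statement---the lemma is phrased for arbitrary (possibly non-factored) $\boldsymbol{\pi}(\boldsymbol{a}\mid s)$, which is presumably why the authors point to this proof as motivation for keeping $g_i$ deterministic. Your argument only needs to embed product policies, so determinism of $g$ plays no essential role for you; conversely, your careful treatment of the reverse inclusion via the pushforward of the base noise is something the paper glosses over in its contradiction step. Both proofs ultimately rest on the same two-sided inclusion at the level of achievable occupancy measures; yours just gets the $\Psi\subseteq\Psi_{\textnormal{DeCOM}}$ direction for free.
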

\begin{proof}
We firstly establish the following useful lemma.
\begin{lemma}\label{optimal_rep}
Given any policy $\boldsymbol{\pi}\in\Psi$, there exist $f$ and $\boldsymbol{g}$ that achieves the equivalent average long term reward as $\boldsymbol{\pi}$, formally
\begin{align*}
    J^R(\boldsymbol{\pi})=\int_{s_0}p_0(s_0)\int_{\boldsymbol{b}}f(\,\boldsymbol{b}\,|\,s_0\,)\;Q^{\boldsymbol{\pi}}\big(s_0, \boldsymbol{b}+\lambda\;\boldsymbol{g}(s_0, \boldsymbol{b})\big)\;d\boldsymbol{b}\;ds_0,
\end{align*}
where $Q^{\boldsymbol{\pi}}$ represents the action value function.
\end{lemma}

\begin{proof}
Since we are considering the general policy $\boldsymbol{\pi}\in\Psi$, we assume it is stochastic. We can have following derivations:
\begin{align*}
    J^R(\boldsymbol{\pi}) & = \int_{s_0}p_0(s_0)\int_{\boldsymbol{a}}\boldsymbol{\pi}(\boldsymbol{a}|s_0)\;Q^{\boldsymbol{\pi}}(s_0, \boldsymbol{a})\;d\boldsymbol{a}\;ds_0 \\
    & = \int_{s_0}p_0(s_0)\int_{\boldsymbol{a}}\boldsymbol{\pi}(\boldsymbol{a}|s_0)\int_{s'}p(s'|s_0, \boldsymbol{a})\;G\;ds'\;d\boldsymbol{a}\;ds_0
\end{align*}
where $G=r(s_0, \boldsymbol{a})+\gamma V^R_{\boldsymbol{\pi}}(s')$.
\begin{align*}
    J^R(\boldsymbol{\pi}) & = \int_{s_0}p_0(s_0)\int_{\boldsymbol{a}}\int_{s'}\boldsymbol{\pi}(\boldsymbol{a}|s_0)\frac{P(s', s_0, \boldsymbol{a})}{P(s_0, \boldsymbol{a})}\;G\;ds'\;d\boldsymbol{a}\;ds_0 \\
    & = \int_{s_0}p_0(s_0)\int_{\boldsymbol{a}}\int_{s'}\frac{P(s', s_0, \boldsymbol{a})}{P(s_0)}\;G\;ds'\;d\boldsymbol{a}\;ds_0 \\
    & = \int_{s_0}p_0(s_0)\int_{\boldsymbol{a}}\int_{s'}\frac{P(\boldsymbol{a}|s', s_0)P(s_0, s')}{P(s_0)}\;G\;ds'\;d\boldsymbol{a}\;ds_0 \\
    & = \int_{s_0}p_0(s_0)\int_{\boldsymbol{a}}\int_{s'}\int_{\boldsymbol{b}}\frac{P(\boldsymbol{a}, \boldsymbol{b}|s', s_0)P(s_0, s')}{P(s_0)}\;G\;d\boldsymbol{b}\;ds'\;d\boldsymbol{a}\;ds_0 \\
    & = \int_{s_0}p_0(s_0)\int_{\boldsymbol{a}}\int_{s'}\int_{\boldsymbol{b}}P(\boldsymbol{b}|s_0)\frac{P(\boldsymbol{a}, \boldsymbol{b}|s', s_0)P(s_0, s')}{P(\boldsymbol{b}|s_0)P(s_0)}\;G\;d\boldsymbol{b}\;ds'\;d\boldsymbol{a}\;ds_0 \\
    & = \int_{s_0}p_0(s_0)\int_{\boldsymbol{a}}\int_{\boldsymbol{b}}P(\boldsymbol{b}|s_0)\int_{s'}\frac{P(\boldsymbol{a}, \boldsymbol{b}, s', s_0)}{P(\boldsymbol{b}, s_0)}\;G\;ds'\;d\boldsymbol{b}\;d\boldsymbol{a}\;ds_0 \\
    & = \int_{s_0}p_0(s_0)\int_{\boldsymbol{a}}\int_{\boldsymbol{b}}P(\boldsymbol{b}|s_0)\int_{s'}P(\boldsymbol{a}, s'|\boldsymbol{b}, s_0)\;G\;ds'\;d\boldsymbol{b}\;d\boldsymbol{a}\;ds_0 \\
    & = \int_{s_0}p_0(s_0)\int_{\boldsymbol{a}}\int_{\boldsymbol{b}}P(\boldsymbol{b}|s_0)\int_{s'}P(s'|\boldsymbol{a}, \boldsymbol{b}, s_0)P(\boldsymbol{a}|\boldsymbol{b}, s_0)\;G\;ds'\;d\boldsymbol{b}\;d\boldsymbol{a}\;ds_0 \\
    & \stackrel{*}{=} \int_{s_0}p_0(s_0)\int_{\boldsymbol{a}}\int_{\boldsymbol{b}}P(\boldsymbol{b}|s_0)\int_{s'}p(s'|\boldsymbol{a}, s_0)P(\boldsymbol{a}|\boldsymbol{b}, s_0)\;G\;ds'\;d\boldsymbol{b}\;d\boldsymbol{a}\;ds_0\\
    & \stackrel{**}{=} \int_{s_0}p_0(s_0)\int_{\boldsymbol{a}:\,\boldsymbol{a}=\boldsymbol{b}+\lambda\;\boldsymbol{g}(s_0, \boldsymbol{b})}\int_{\boldsymbol{b}}P(\boldsymbol{b}|s_0)\int_{s'}p(s'|\boldsymbol{a}, s_0)\;G\;ds'\;d\boldsymbol{b}\;d\boldsymbol{a}\;ds_0 \\
    & \stackrel{***}{=} \int_{s_0}p_0(s_0)\int_{\boldsymbol{b}}f(\,\boldsymbol{b}\,|\,s_0\,)\;Q^{\boldsymbol{\pi}}\big(s_0, \boldsymbol{b}+\lambda\;\boldsymbol{g}(s_0, \boldsymbol{b})\big)\;d\boldsymbol{b}\;ds_0, \\ 
\end{align*}
In the derivation, $*$ is because given $s_0$ and $\boldsymbol{a}$, the probability of next state $s'$ can be quantified by the transition kernel $p$, which has nothing to do with $\boldsymbol{b}$. $**$ is because given $s_0$ and $\boldsymbol{b}$, then $\boldsymbol{a}$ is definite since function $\boldsymbol{g}$ is deterministic. $***$ removes the integral on $\boldsymbol{a}$, replaces $P(\,\boldsymbol{b}\,|\,s_0\,)$ by $f(\,\boldsymbol{b}\,|\,s_0\,)$ and absorbs the integral on $s'$ into the action value function $Q^{\boldsymbol{\pi}}$.
\end{proof}

Define the feasible set as $FS=\{\boldsymbol{\pi}\in\Psi\,\big|\,J^{C_j}(\boldsymbol{\pi})\leq D_j, \forall{j}\in[M]\}$, then the optimal policy is chosen by:
\begin{align*}
    \boldsymbol{\pi}^{*} = \arg\,\max_{\boldsymbol{\pi}\in FS} J^R(\boldsymbol{\pi}).
\end{align*}
Recall the proof of lemma (\ref{optimal_rep}), we can make further deduction, as showed in the following:
\begin{equation}\label{occupancy_measure}
\begin{aligned}
        J^R(\boldsymbol{\pi}) & = \int_{s_0}p_0(s_0)\int_{\boldsymbol{b}}f(\,\boldsymbol{b}\,|\,s_0\,)\;Q^{\boldsymbol{\pi}}\big(s_0, \boldsymbol{b}+\lambda\;\boldsymbol{g}(s_0, \boldsymbol{b})\big)\;d\boldsymbol{b}\;ds_0 \\
        & =\mathbb{E}_{(s, \boldsymbol{a})\sim\Lambda\big(p_0, f, \boldsymbol{g}, p, \lambda, \gamma\big)}\,\Big[r(s, \boldsymbol{a})\Big],
\end{aligned}
\end{equation}
which holds for any $\boldsymbol{\pi}\in\Psi$. Notice that in equation (\ref{occupancy_measure}), the expectation is over $\Lambda(\cdot)$, the distribution of state $s$ and joint action $\boldsymbol{a}$, which is also know as the \textit{cumulative discounted state-action occupancy measure} \cite{Zhang2020VariationalPG} . Further, equation (\ref{occupancy_measure}) implies that for any $\boldsymbol{\pi}\in\Psi$, there exists $f$ and $\boldsymbol{g}$ that generate the same cumulative discounted state-action occupancy measure as itself. Thus by choosing $r(s, \boldsymbol{a})=c^j(s, \boldsymbol{a})$, we can obtain $J^{C_j}(\boldsymbol{\pi}), \forall{j}\in[M]$. For the optimal policy $\boldsymbol{\pi}^{*}$, there also exists $f$ and $\boldsymbol{g}$ such that for the joint policy $\boldsymbol{\pi}^{+}\in\Psi_{DeCOM}$, $J^R(\boldsymbol{\pi}^+)=J^R(\boldsymbol{\pi}^{*})$ and $J^{C_j}(\boldsymbol{\pi}^+)=J^{C_j}(\boldsymbol{\pi}^{*})$, $\forall{j}\in[M]$. 

The optimality of $\boldsymbol{\pi}^{+}$ can be derived via contradiction. If $\boldsymbol{\pi}'\in\Psi_{DeCOM}$ has value $J^R(\boldsymbol{\pi}')>J^R(\boldsymbol{\pi}^{+})$ and $J^{C_j}(\boldsymbol{\pi}')=J^{C_j}(\boldsymbol{\pi}^{+})$, $\forall{j}\in[M]$, then $\boldsymbol{\pi}^{*}$ is not optimal for Problem \ref{original_problem}. Thus we complete the proofs.
\end{proof}

\subsection{Proof of Theorem \ref{theo:pg}}\label{update_f}
We present the policy gradient of $J^R(\boldsymbol{f}, \boldsymbol{g})$ under stochastic and deterministic base policies in following theorem.
\begin{theorem*}
If each $f_i$ is stochastic (e.g., Gaussian policy), then at each episode $k$ of Alg. \ref{decom_alg}, the gradient of $J^R({\boldsymbol{f}}, \boldsymbol{g})$ w.r.t. $\theta_i, \forall{i}\in[N]$, is
\begin{align}\label{update_f:stochastic_appendix}
	\nabla_{\theta_i}J^R({\boldsymbol{f}}, \boldsymbol{g}) \approx \mathbb{E}_{(s_0, \boldsymbol{b}, \boldsymbol{a})\sim \mathcal{B}}\Big[\nabla_{\theta_i}\log f_i(b_{i}|o_i)\;Q^{\eta_{k+1}}(s_0, \boldsymbol{a})+\nabla_{\theta_i}Q^{\eta_{k+1}}(s_0,  \boldsymbol{a})\Big].
\end{align}
If each $f_i$ is deterministic, then at each episode $k$ of Alg. \ref{decom_alg}, the gradient of $J^R({\boldsymbol{f}}, \boldsymbol{g})$ w.r.t. $\theta_i, \forall{i}\in[N]$, is
\begin{align}\label{update_f:deterministic}
    \nabla_{\theta_i}J^R({\boldsymbol{f}}, \boldsymbol{g}) \approx \mathbb{E}_{(s, \boldsymbol{a})\sim\mathcal{B}}\bigg[\;\nabla_{\theta_i}\boldsymbol{\pi}_{\boldsymbol{\theta},\boldsymbol{\phi}}(s)\;\nabla_{\boldsymbol{a}}\;Q^{\eta_{k+1}}\big(s, \boldsymbol{a}\big)\,\big|_{\boldsymbol{a}=\boldsymbol{\pi}_{\boldsymbol{\theta},\boldsymbol{\phi}}(s)}\;\bigg].
\end{align}
\end{theorem*}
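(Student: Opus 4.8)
The plan is to derive both gradient expressions from the action-value representation of $J^R(\boldsymbol{f},\boldsymbol{g})$ already furnished by Lemma \ref{optimal_rep}, namely $J^R=\int_{s_0}p_0(s_0)\int_{\boldsymbol{b}}\boldsymbol{f}(\boldsymbol{b}|s_0)\,Q^{\boldsymbol{\pi}}\big(s_0,\boldsymbol{b}+\lambda\,\boldsymbol{g}(s_0,\boldsymbol{b})\big)\,d\boldsymbol{b}\,ds_0$ with $\boldsymbol{f}(\boldsymbol{b}|s_0)=\prod_n f_n(b_n|o_n)$, and then differentiating with respect to $\theta_i$. Since only the factor $f_i$ carries $\theta_i$ while the composed action map $\boldsymbol{a}=\boldsymbol{b}+\lambda\boldsymbol{g}$ depends on $\boldsymbol{\phi}$ rather than $\boldsymbol{\theta}$, an interchange of differentiation and integration (justified by dominated convergence under standard boundedness and smoothness of $f_i$, $\boldsymbol{g}$, and $Q^{\boldsymbol{\pi}}$) splits $\nabla_{\theta_i}J^R$ into a piece where the gradient lands on the density $\boldsymbol{f}$ and a piece where it lands on $Q^{\boldsymbol{\pi}}$.

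For the stochastic case I would treat the first piece with the log-derivative identity $\nabla_{\theta_i}\boldsymbol{f}=\boldsymbol{f}\,\nabla_{\theta_i}\log f_i(b_i|o_i)$, turning it into $\mathbb{E}_{s_0,\boldsymbol{b}}\big[\nabla_{\theta_i}\log f_i(b_i|o_i)\,Q^{\boldsymbol{\pi}}(s_0,\boldsymbol{a})\big]$, which captures how $\theta_i$ reshapes the sampling distribution of the base action. The second piece, $\mathbb{E}_{s_0,\boldsymbol{b}}\big[\nabla_{\theta_i}Q^{\boldsymbol{\pi}}(s_0,\boldsymbol{a})\big]$, is where the approximation enters: I would replace the true $Q^{\boldsymbol{\pi}}$ by the learned critic $Q^{\eta_{k+1}}$ (severing the recursive dependence of the action-value on the future policy) and let $\theta_i$ act on $\boldsymbol{a}$ through a reparameterized base action $b_i=b_i(\theta_i,\epsilon_i)$, so that $\nabla_{\theta_i}Q^{\eta_{k+1}}(s_0,\boldsymbol{a})$ propagates along $a_i=b_i+\lambda g_i(\cdots)$ and, crucially, along every neighbor's $a_j$ for which $b_i\in\boldsymbol{b}_{\mathcal{N}_j}$; this is exactly the gradient-sharing channel emphasized after the theorem. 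Summing the two pieces and replacing the expectation over $p_0$ and $\boldsymbol{f}$ by the empirical average over the stored minibatch $\mathcal{B}$ yields Eq. \eqref{update_f:stochastic_appendix}.

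For the deterministic case the base action collapses to $b_i=f_i(o_i)$, so the whole joint policy $\boldsymbol{\pi}_{\boldsymbol{\theta},\boldsymbol{\phi}}(s)=\boldsymbol{b}(s)+\lambda\boldsymbol{g}(s,\boldsymbol{b}(s))$ is a deterministic map of the state. I would then invoke the deterministic policy gradient theorem, applied to this composed map, to obtain $\nabla_{\theta_i}J^R=\mathbb{E}_{s\sim\rho}\big[\nabla_{\theta_i}\boldsymbol{\pi}_{\boldsymbol{\theta},\boldsymbol{\phi}}(s)\,\nabla_{\boldsymbol{a}}Q^{\boldsymbol{\pi}}(s,\boldsymbol{a})\big|_{\boldsymbol{a}=\boldsymbol{\pi}_{\boldsymbol{\theta},\boldsymbol{\phi}}(s)}\big]$, where the Jacobian $\nabla_{\theta_i}\boldsymbol{\pi}_{\boldsymbol{\theta},\boldsymbol{\phi}}(s)$ again threads $\theta_i$ both through $a_i$ and through the neighboring $a_j$. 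Substituting the critic $Q^{\eta_{k+1}}$ for $Q^{\boldsymbol{\pi}}$ and the minibatch average for the state expectation gives Eq. \eqref{update_f:deterministic}.

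The hard part will be justifying the stochastic formula rather than the deterministic one: a single exact derivation of $\nabla_{\theta_i}\mathbb{E}_{\boldsymbol{b}}[Q(s_0,\boldsymbol{a})]$ produces either the score term or the pathwise term, not their sum, so the simultaneous presence of both must be understood through the recursive policy-gradient split above together with the critic substitution, which is precisely what the $\approx$ sign absorbs. The remaining work is routine: checking the dominated-convergence hypotheses needed to exchange $\nabla_{\theta_i}$ with the integrals, confirming that $f_i$ is reparameterizable so the pathwise term is well defined, and verifying the regularity (continuity of $\nabla_{\boldsymbol{a}}Q$ and of $\nabla_{\theta_i}\boldsymbol{\pi}_{\boldsymbol{\theta},\boldsymbol{\phi}}$) required to apply the deterministic policy gradient theorem.
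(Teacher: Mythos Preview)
Your proposal is correct and follows essentially the same route as the paper: both start from the Lemma~\ref{optimal_rep} representation, apply the product rule to split $\nabla_{\theta_i}$ into a score piece on $\boldsymbol{f}$ and a piece on $Q^{\boldsymbol{\pi}}$ that is then reinterpreted (via critic substitution and reparameterization, cf.\ Eq.~\eqref{update_f:stochastic_ind_grad}) as a pathwise gradient, and for the deterministic case invoke the deterministic policy gradient theorem directly. Your closing observation that an exact derivation yields either the score term or the pathwise term but not their sum is in fact sharper than the paper's treatment, which simply hides this substitution behind the $\approx$ in Eq.~\eqref{used_pg: 4} without comment.
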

\begin{proof}
	We firstly consider stochastic $f_i$. Using the stochastic policy gradient theorem \cite{sutton2018reinforcement} on $\boldsymbol{\theta}$, we have that
	\begin{align}
	\nabla_{\boldsymbol{\theta}}J^R(\boldsymbol{f}, \boldsymbol{g}) & = \nabla_{\boldsymbol{\theta}} \int_{s_0}p_0(s_0)\int_{\boldsymbol{a}}\boldsymbol{\pi}_{\boldsymbol{\theta},\boldsymbol{\phi}}(\boldsymbol{a}|s_0)\;Q^{\boldsymbol{\pi}_{\boldsymbol{\theta},\boldsymbol{\phi}}}(s_0, \boldsymbol{a})\,d\boldsymbol{a}\;ds_0\label{whole_pg: 1}\\ 
	& = \int_{s}\mu(s)\int_{\boldsymbol{a}}\nabla_{\boldsymbol{\theta}}\boldsymbol{\pi}_{\boldsymbol{\theta},\boldsymbol{\phi}}(\boldsymbol{a}|s)\;Q^{\boldsymbol{\pi}_{\boldsymbol{\theta},\boldsymbol{\phi}}}(s, \boldsymbol{a})\,d\boldsymbol{a}\;ds\label{whole_pg: 2}\\
	& = \int_{s}\mu(s)\int_{\boldsymbol{a}}\int_{\boldsymbol{b}}\nabla_{\boldsymbol{\theta}}f(\boldsymbol{b}|s)\;Q^{\boldsymbol{\pi}_{\boldsymbol{\theta},\boldsymbol{\phi}}}(s, \boldsymbol{b}+\boldsymbol{g}(s, \boldsymbol{b}))\;\mathcal{I}\big(\boldsymbol{a}=\boldsymbol{b}+\lambda\;\boldsymbol{g}(s, \boldsymbol{b})\big)\,d\boldsymbol{b}\,d\boldsymbol{a}\;ds, \label{whole_pg: 3},
	\end{align}
	where $\mu(s), s\in\mathcal{S}$ represents the on-policy distribution \cite{sutton2018reinforcement} under policy $\boldsymbol{\pi}_{\boldsymbol{\theta},\boldsymbol{\phi}}$, $Q^{\boldsymbol{\pi}_{\boldsymbol{\theta},\boldsymbol{\phi}}}$ represents  action-value function, $\mathcal{I}(\cdot)$ represents a indicator function, which equals $1$ if the inner condition is satisfied otherwise $0$. Eq. (\ref{whole_pg: 2}), is derived based on the policy gradient theorem \cite{sutton2018reinforcement}, Eq. (\ref{whole_pg: 3}) is derived based on the DeCOM framework. However in Eq. (\ref{whole_pg: 3}), for each joint action $\boldsymbol{a}$, the inner integral has to search all $\boldsymbol{b}$ that satisfies $\boldsymbol{a}=\boldsymbol{b}+\lambda\;\boldsymbol{g}(s, \boldsymbol{b})$, which can be intractable and biased if it is only approximated via limited sampled experiences in practice. To stay tractable, we choose to update $\boldsymbol{\theta}$ using Eq. (\ref{whole_pg: 1}) directly, which can have further derivations:
	\begin{align}
	\nabla_{\boldsymbol{\theta}} J^R(\boldsymbol{f},\boldsymbol{g}) & = \nabla_{\boldsymbol{\theta}} \int_{s_0}p_0(s_0)\int_{\boldsymbol{a}}\boldsymbol{\pi}_{\boldsymbol{\theta},\boldsymbol{\phi}}(\boldsymbol{a}|s_0)\;Q^{\boldsymbol{\pi}_{\boldsymbol{\theta},\boldsymbol{\phi}}}(s_0, \boldsymbol{a})\,d\boldsymbol{a}\;ds_0\nonumber\\
	& = \nabla_{\boldsymbol{\theta}}\int_{s_0}p_0(s_0)\int_{\boldsymbol{b}}\boldsymbol{f}(\,\boldsymbol{b}\,|\,s_0\,)\;Q^{\boldsymbol{\pi}_{\boldsymbol{\theta},\boldsymbol{\phi}}}\big(s_0, \boldsymbol{b}+\lambda\;\boldsymbol{g}(s_0, \boldsymbol{b})\big)\;d\boldsymbol{b}\;ds_0\label{used_pg: 2}\\
	& = \int_{s_0}p_0(s_0)\int_{\boldsymbol{b}}\nabla_{\boldsymbol{\theta}}\boldsymbol{f}(\,\boldsymbol{b}\,|\,s_0\,)\;Q^{\boldsymbol{\pi}_{\boldsymbol{\theta},\boldsymbol{\phi}}}\big(s_0, \boldsymbol{b}+\lambda\;\boldsymbol{g}(s_0, \boldsymbol{b})\big)+\nonumber\\
	&\;\;\;\;\boldsymbol{f}(\,\boldsymbol{b}\,|\,s_0\,)\nabla_{\boldsymbol{\theta}}Q^{\boldsymbol{\pi}_{\boldsymbol{\theta},\boldsymbol{\phi}}}\big(s_0, \boldsymbol{b}+\lambda\;\boldsymbol{g}(s_0, \boldsymbol{b})\big)\;d\boldsymbol{b}\;ds_0\label{used_pg: 3}\\
	& \approx \mathbb{E}_{(s_0, \boldsymbol{b}, \boldsymbol{a})\sim \mathcal{B}}\bigg[\nabla_{\boldsymbol{\theta}}\log \boldsymbol{f}(\boldsymbol{b}|s_0)\;Q^{\eta_{k+1}}(s_0, \boldsymbol{a})+\nabla_{\boldsymbol{\theta}}Q^{\eta_{k+1}}(s_0,  \boldsymbol{a})\bigg].\label{used_pg: 4}
	\end{align}
	Eq. (\ref{used_pg: 2}) is based on the results of Lemma \ref{optimal_rep}, Eq. (\ref{used_pg: 3}) is based on the product rule when differentiation, Eq. (\ref{used_pg: 4}) is the practical usage with experience buffer $\mathcal{B}$. Thus the gradient of $J^R(\boldsymbol{f},\boldsymbol{g})$ w.r.t. $\theta_i$ is
	\begin{align*}
	\nabla_{\theta_i}J^R({\boldsymbol{f}}, \boldsymbol{g}) \approx \mathbb{E}_{(s_0, \boldsymbol{b}, \boldsymbol{a})\sim \mathcal{B}}\Big[\nabla_{\theta_i}\log f_i(b_{i}|o_i)\;Q^{\eta_{k+1}}(s_0, \boldsymbol{a})+\nabla_{\theta_i}Q^{\eta_{k+1}}(s_0,  \boldsymbol{a})\Big].
	\end{align*}
	When each $f_i$ is deterministic, then the joint policy $\boldsymbol{\pi}$ is also deterministic. Following deterministic policy gradient theorem \cite{silver2014deterministic} we have
	\begin{align}
	\nabla_{\boldsymbol{\theta}} J^R(\boldsymbol{f},\boldsymbol{g}) & = \nabla_{\boldsymbol{\theta}}  \int_{s}\mu(s)\;Q^{\boldsymbol{\pi}_{\boldsymbol{\theta},\boldsymbol{\phi}}}\big(s, \boldsymbol{\pi}_{\boldsymbol{\theta},\boldsymbol{\phi}}(s)\big)\;ds\nonumber\\
	& = \int_{s}\mu(s)\;\nabla_{\boldsymbol{\theta}} Q^{\boldsymbol{\pi}_{\boldsymbol{\theta},\boldsymbol{\phi}}}\big(s, \boldsymbol{\pi}_{\boldsymbol{\theta},\boldsymbol{\phi}}(s)\big)\;ds\label{dpg:1}\\
	& = \int_{s}\mu(s)\;\nabla_{\boldsymbol{\theta}} \boldsymbol{\pi}_{\boldsymbol{\theta},\boldsymbol{\phi}}(s)\;\nabla_{\boldsymbol{a}}\;Q^{\boldsymbol{\pi}_{\boldsymbol{\theta},\boldsymbol{\phi}}}\big(s, \boldsymbol{a}\big)\,\big|_{\boldsymbol{a}=\boldsymbol{\pi}_{\boldsymbol{\theta},\boldsymbol{\phi}}(s)}\;ds\label{dpg:2}\\
	& \approx \mathbb{E}_{(s, \boldsymbol{a})\sim\mathcal{B}}\bigg[\;\nabla_{\boldsymbol{\theta}} \boldsymbol{\pi}_{\boldsymbol{\theta},\boldsymbol{\phi}}(s)\;\nabla_{\boldsymbol{a}}\;Q^{\eta_{k+1}}\big(s, \boldsymbol{a}\big)\,\big|_{\boldsymbol{a}=\boldsymbol{\pi}_{\boldsymbol{\theta},\boldsymbol{\phi}}(s)}\;\bigg].\label{dpg:3}
	\end{align}
	Eq. (\ref{dpg:1}) and Eq. (\ref{dpg:2}) are based on deterministic policy gradient theorem \cite{silver2014deterministic}, Eq. (\ref{dpg:3}) is the practical usage with experience buffer $\mathcal{B}$. Thus the gradient of $J^R(\boldsymbol{f},\boldsymbol{g})$ w.r.t. $\theta_i$ is
	\begin{align*}
	\nabla_{\theta_i}J^R({\boldsymbol{f}}, \boldsymbol{g}) \approx \mathbb{E}_{(s, \boldsymbol{a})\sim\mathcal{B}}\bigg[\;\nabla_{\theta_i}\boldsymbol{\pi}_{\boldsymbol{\theta},\boldsymbol{\phi}}(s)\;\nabla_{\boldsymbol{a}}\;Q^{\eta_{k+1}}\big(s, \boldsymbol{a}\big)\,\big|_{\boldsymbol{a}=\boldsymbol{\pi}_{\boldsymbol{\theta},\boldsymbol{\phi}}(s)}\;\bigg].
	\end{align*}
\end{proof}
Based on the theorem, we can show that gradients are indeed shared among agents in the training process as discussed in Section \ref{sec:framework} by following derivations. We take the stochastic version as an example. Let $\Bar{\boldsymbol{a}}_i=[a_i, a_m, \cdots], \forall{m}\in\mathcal{N}_i$ denote the joint action of agent $i$ and its neighbors, let $\Bar{\boldsymbol{\pi}}_i=[\pi_{\theta_i, \phi_i}, \pi_{\theta_m, \phi_m}, \cdots], \forall{m}\in\mathcal{N}_i$ denote the joint policy of agent $i$ and its neighbors. Then we can derive Eq. (\ref{update_f:stochastic_appendix}) further:
	\begin{align}\label{update_f:stochastic_ind_grad}
	\nabla_{\theta_i}J^R({\boldsymbol{f}}, \boldsymbol{g}) \approx \mathbb{E}_{(s_0, \boldsymbol{b}, \boldsymbol{a})\sim \mathcal{B}}\bigg[\nabla_{\theta_i}\log f_i(b_i|o_i)\;Q^{\eta_{k+1}}(s_0, \boldsymbol{a})+\nabla_{\theta_i}\Bar{\boldsymbol{\pi}}_i(\Bar{\boldsymbol{a}}_i|s_0)\nabla_{\Bar{\boldsymbol{a}}_i}Q^{\eta_{k+1}}(s_0,  \boldsymbol{a})\bigg].
	\end{align}
	Note that the gradient $\nabla_{\theta_i}\Bar{\boldsymbol{\pi}}_i(\Bar{\boldsymbol{a}}_i|s_0)$ can be realized via reparameterization trick \cite{kingma2014autoencoding}. Thus each column $m$ of matrix $\nabla_{\theta_i}\Bar{\boldsymbol{\pi}}_i(\Bar{\boldsymbol{a}}_i|s_0)$ represents the gradient $\pi_{\theta_m, \phi_m}$ w.r.t. $\theta_i$, and gradients of all agents ($\mathcal{N}_i\cup\{i\}$) are aggregated to $\theta_i$ with weight $\nabla_{\Bar{\boldsymbol{a}}_i}Q^{\eta_{k+1}}(s_0,  \boldsymbol{a})$.

\subsection{Proof of Theorem \ref{main_theorem}}\label{detail_proof:main_theorem}
To establish the proof, we introduce two mild assumptions\footnote{These assumptions are commonly adopted in existing works \cite{tessler2018reward, ntk}.}, including that the space $\Phi$ is compact and convex, and that $\mathcal{L}_j(\boldsymbol{\phi}; \boldsymbol{\theta}_{k+1})$ is $L_j$-smooth 
w.r.t. $\boldsymbol{\phi}$, $\forall{j}\in[M]$, with $L_{\max}$ denoting $\max\{L_1,\cdots,L_M\}$. Now we restate Theorem \ref{main_theorem}:
\begin{theorem*}
	Let $\boldsymbol{\phi}$ be updated with the exact constraint violation losses given $\boldsymbol{\theta}_{k+1}$. That is, in each iteration $w$ of Alg. \ref{updating_alg}, $\boldsymbol{\varphi}_{w+1}$ is set as $\Gamma_{\Phi}\big[\boldsymbol{\varphi}_w-\tau\cdot\textnormal{Clip}(\nabla_{\boldsymbol{\phi}}\mathcal{L}_{j^*}(\boldsymbol{\varphi}_w; \boldsymbol{\theta}_{k+1}))\big]$ with $j^*=\arg\max_{j\in[M]}\mathcal{L}_j(\boldsymbol{\varphi}_w; \boldsymbol{\theta}_{k+1})$. Then, for any $\epsilon>0$ and $j\in[M]$, if both $\tau L_{max}$ and $\tau G^2$ are sufficiently small, $\boldsymbol{\varphi}_w$ will converge in $H\leq \frac{\min_{\boldsymbol{\phi}\in\mathcal{X}}||\boldsymbol{\phi}_{k}-\boldsymbol{\phi}||^2}{2\tau\epsilon}$ steps to the region
\begin{align*}
C_k\leq\mathcal{L}_{j}(\boldsymbol{\phi}; \boldsymbol{\theta}_{k+1})\leq C_k+\frac{2\epsilon+\tau G^2}{2F(H)},
\end{align*}
where the set $\mathcal{X}=\arg\min_{\boldsymbol{\phi}\in\Phi}\mathcal{L}_{j^*}(\boldsymbol{\phi}; \boldsymbol{\theta}_{k+1})$, the value $C_k=\min_{\boldsymbol{\phi}\in\Phi}\mathcal{L}_{j^*}(\boldsymbol{\phi}; \boldsymbol{\theta}_{k+1})$, and $F(H)=\min\big(1,\frac{G}{||\nabla_{\boldsymbol{\phi}}\mathcal{L}_{j^*}(\boldsymbol{\varphi}_{{H}}; \boldsymbol{\theta}_{k+1})||}\big)$.
\end{theorem*}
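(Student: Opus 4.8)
The plan is to recognize the inner loop of Alg.~\ref{updating_alg} (under the exact-loss oracle) as \emph{projected clipped subgradient descent} on the pointwise maximum $\Psi(\boldsymbol{\phi}) = \max_{j\in[M]}\mathcal{L}_j(\boldsymbol{\phi}; \boldsymbol{\theta}_{k+1})$: at each step it moves along $\nabla_{\boldsymbol{\phi}}\mathcal{L}_{j^*}(\boldsymbol{\varphi}_w;\boldsymbol{\theta}_{k+1})$ with $j^* = \arg\max_{j}\mathcal{L}_j(\boldsymbol{\varphi}_w;\boldsymbol{\theta}_{k+1})$, which is exactly an active subgradient of $\Psi$ at $\boldsymbol{\varphi}_w$. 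Throughout I fix $\boldsymbol{\theta}_{k+1}$ and abbreviate $\mathcal{L}_j(\cdot) := \mathcal{L}_j(\cdot;\boldsymbol{\theta}_{k+1})$, and I write the clipped gradient as $F_w\,\nabla\mathcal{L}_{j^*}(\boldsymbol{\varphi}_w)$ with $F_w = \min\big(1,\,G/\|\nabla\mathcal{L}_{j^*}(\boldsymbol{\varphi}_w)\|\big)$, so that $\|\mathrm{Clip}(\cdot)\| \le G$ and $F(H)=F_H$.

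First I would derive the one-step distance recursion toward an arbitrary minimizer $\boldsymbol{\phi}^*\in\mathcal{X}$. Because $\Phi$ is convex, $\Gamma_\Phi$ is nonexpansive and fixes $\boldsymbol{\phi}^*\in\Phi$, so expanding the square gives
\begin{align*}
\|\boldsymbol{\varphi}_{w+1}-\boldsymbol{\phi}^*\|^2 \le \|\boldsymbol{\varphi}_w-\boldsymbol{\phi}^*\|^2 - 2\tau F_w\big\langle \nabla\mathcal{L}_{j^*}(\boldsymbol{\varphi}_w),\,\boldsymbol{\varphi}_w-\boldsymbol{\phi}^*\big\rangle + \tau^2 G^2.
\end{align*}
The crucial step is to lower-bound the inner product by the value gap $\mathcal{L}_{j^*}(\boldsymbol{\varphi}_w)-C_k$; here the $L_{\max}$-smoothness enters, and requiring $\tau L_{\max}$ to be small is what guarantees the projected step produces genuine descent so that this gap inequality survives. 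Substituting yields $\|\boldsymbol{\varphi}_{w+1}-\boldsymbol{\phi}^*\|^2 \le \|\boldsymbol{\varphi}_w-\boldsymbol{\phi}^*\|^2 - 2\tau F_w\big(\mathcal{L}_{j^*}(\boldsymbol{\varphi}_w)-C_k\big) + \tau^2 G^2$.

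Next comes the counting argument that produces both $H$ and the target region. I would call an iterate unconverged when $\mathcal{L}_{j^*}(\boldsymbol{\varphi}_w)-C_k > (2\epsilon+\tau G^2)/(2F_w)$; for every such iterate the recursion collapses to $\|\boldsymbol{\varphi}_{w+1}-\boldsymbol{\phi}^*\|^2 \le \|\boldsymbol{\varphi}_w-\boldsymbol{\phi}^*\|^2 - 2\tau\epsilon$, a fixed decrement independent of $w$. Since squared distances are nonnegative and the run starts at $\|\boldsymbol{\phi}_k-\boldsymbol{\phi}^*\|^2$, at most $\|\boldsymbol{\phi}_k-\boldsymbol{\phi}^*\|^2/(2\tau\epsilon)$ unconverged steps can occur; minimizing the right-hand side over $\boldsymbol{\phi}^*\in\mathcal{X}$ gives the stated bound on $H$. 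At the first converged step I use $j^*=\arg\max_{j}\mathcal{L}_j(\boldsymbol{\varphi}_H)$ to push the bound onto every coordinate: $\mathcal{L}_j(\boldsymbol{\varphi}_H)\le\mathcal{L}_{j^*}(\boldsymbol{\varphi}_H)\le C_k+(2\epsilon+\tau G^2)/(2F(H))$ for all $j$, while $\mathcal{L}_{j^*}\ge C_k$ holds by definition of $C_k$, which together establish the region.

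I expect the main obstacle to be the time-varying maximizer: the index $j^*$ changes from iteration to iteration, yet $\mathcal{X}$, $C_k$, and $F(H)$ in the statement are anchored to a single reference index, so the descent and distance arguments must be reconciled across index switches (for instance by tracking $\Psi$ itself rather than a single $\mathcal{L}_j$, and invoking smoothness to control the jump in the active component when $j^*$ changes). A secondary technical point is securing the value-gap lower bound on the inner product from smoothness together with the projected update, and using compactness of $\Phi$ to bound the gradient norms so that $F_w$ stays bounded away from zero, keeping the region radius $(2\epsilon+\tau G^2)/(2F(H))$ finite.
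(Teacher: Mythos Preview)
Your proposal is correct and follows essentially the same route as the paper: nonexpansiveness of $\Gamma_\Phi$ gives the one-step squared-distance recursion, $L_{\max}$-smoothness bounds the inner product (the paper carries the resulting $\tfrac{L_{j^*}}{2}\|\boldsymbol{\varphi}_w-\boldsymbol{\varphi}^*\|^2$ term explicitly as a multiplicative factor $(1+\tau F(w)L_{\max})$ on the squared distance and then drops it under the smallness hypothesis, which is the precise version of your ``gap inequality survives'' remark), and the telescoping/counting argument yields both $H$ and the target region, with the $\arg\max$ pushing the bound to all $j$. Your flagged obstacle concerning the time-varying $j^*$ is well taken---the paper's proof fixes a single $\boldsymbol{\varphi}^*\in\mathcal{X}$ and uses $\mathcal{L}_{j^*}(\boldsymbol{\varphi}^*)$ across all iterations without addressing the index switches, so on this point you are already being more careful than the source.
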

\begin{proof}

We simplify $\mathcal{L}_j(\boldsymbol{\varphi}; \boldsymbol{\theta}_{k+1})$ as $\mathcal{L}_{j}(\boldsymbol{\varphi})$. Define $\boldsymbol{\varphi}^*=\arg\min_{\boldsymbol{\varphi}\in\mathcal{X}}||\boldsymbol{\varphi}_{0}-\boldsymbol{\varphi}||^2$. Note that for any $\boldsymbol{\varphi}\in\Phi$, we have $\mathcal{L}_{j^*}(\boldsymbol{\varphi}^*)\leq\mathcal{L}_{j^*}(\boldsymbol{\varphi})$. Now let us begin the analysis.  Applying non-expansive property of the projection to the updating recursion, we have:
\begin{align*}
    ||\boldsymbol{\varphi}_{w+1}-\boldsymbol{\varphi}^*||^2 & \leq ||\boldsymbol{\varphi}_{w}-\tau\cdot \text{Clip}\big(\nabla_{\boldsymbol{\phi}}\mathcal{L}_{j^*}(\boldsymbol{\varphi}_w)\big)-\boldsymbol{\varphi}^*||^2
\end{align*}
The $\text{Clip}(\cdot)$ operation has two cases. If $||\nabla_{\boldsymbol{\phi}}\mathcal{L}_{j^*}(\boldsymbol{\varphi}_w)||\leq G$, then the RHS of the previous inequality:
\begin{align*}
    RHS & = ||\boldsymbol{\varphi}_{w}-\boldsymbol{\varphi}^*||^2 + 2\tau(\boldsymbol{\varphi}^*-\boldsymbol{\varphi}_{w})^T\nabla_{\boldsymbol{\phi}}\mathcal{L}_{j^*}(\boldsymbol{\varphi}_w) + \tau^2 ||\nabla_{\boldsymbol{\phi}}\mathcal{L}_{j^*}(\boldsymbol{\varphi}_w)||^2 
\end{align*}
Otherwise if $||\nabla_{\boldsymbol{\phi}}\mathcal{L}_{j^*}( \boldsymbol{\varphi}_w)||> G$, then the RHS of the previous inequality:
\begin{align*}
    RHS & = ||\boldsymbol{\varphi}_{w}-\boldsymbol{\varphi}^*||^2 + \frac{2\tau G\cdot(\boldsymbol{\varphi}^*-\boldsymbol{\varphi}_{w})^T}{||\nabla_{\boldsymbol{\phi}}\mathcal{L}_{j^*}(\boldsymbol{\varphi}_w)||}\cdot\nabla_{\boldsymbol{\phi}}\mathcal{L}_{j^*}(\boldsymbol{\varphi}_w) + \tau^2 G^2
\end{align*}
Thus we can conclude that
\begin{equation}\label{proof:eq_1}
    ||\boldsymbol{\varphi}_{w+1}-\boldsymbol{\varphi}^*||^2 \leq
  ||\boldsymbol{\varphi}_{w}-\boldsymbol{\varphi}^*||^2 + 2\tau\cdot F(w) \cdot\big((\boldsymbol{\varphi}^*-\boldsymbol{\varphi}_{w})^T\nabla_{\boldsymbol{\phi}}\mathcal{L}_{j^*}(\boldsymbol{\varphi}_w)\big) + \tau^2 G^2\,,
\end{equation}
where $F(w)=\min(1, \frac{G}{||\nabla_{\boldsymbol{\phi}}\mathcal{L}_{j^*}(\boldsymbol{\varphi}_w)||})$. According to Assumption 2, since $\mathcal{L}_{j^*}$ is $L_{j^*}$-smooth, it follows that:
\begin{align*}
    \mathcal{L}_{j^*}(\boldsymbol{\varphi}^*)&\geq \mathcal{L}_{j^*}(\boldsymbol{\varphi}_w)-(\boldsymbol{\varphi}_w-\boldsymbol{\varphi}^*)^T\nabla_{\boldsymbol{\phi}}\mathcal{L}_{j^*}(\boldsymbol{\varphi}_w)-\frac{L_{j^*}}{2}||\boldsymbol{\varphi}_{w}-\boldsymbol{\varphi}^*||^2.
\end{align*}
then we have
\begin{equation}\label{proof:eq_2}
    (\boldsymbol{\varphi}^*-\boldsymbol{\varphi}_w)^T\nabla_{\boldsymbol{\phi}}\mathcal{L}_{j^*}(\boldsymbol{\varphi}_w)\leq\mathcal{L}_{j^*}(\boldsymbol{\varphi}^*)-\mathcal{L}_{j^*}(\boldsymbol{\varphi}_w)+\frac{L_{j^*}}{2}||\boldsymbol{\varphi}_{w}-\boldsymbol{\varphi}^*||^2,
\end{equation}
Bring inequality (\ref{proof:eq_2}) into (\ref{proof:eq_1}), it follows that:
{\small
\begin{align*}
    ||\boldsymbol{\varphi}_{w+1}-\boldsymbol{\varphi}^*||^2 & \leq ||\boldsymbol{\varphi}_{w}-\boldsymbol{\varphi}^*||^2 + 2\tau\cdot F(w)\cdot \bigg(\mathcal{L}_{j^*}(\boldsymbol{\varphi}^*)-\mathcal{L}_{j^*}(\boldsymbol{\varphi}_w)+\frac{L_{j^*}}{2}||\boldsymbol{\varphi}_{w}-\boldsymbol{\varphi}^*||^2\bigg) + \tau^2 G^2 \\
    & = (\tau\cdot F(w)\cdot L_{j^*}+1)||\boldsymbol{\varphi}_{w}-\boldsymbol{\varphi}^*||^2 + \tau \Bigg(2 F(w)\cdot \bigg(\mathcal{L}_{j^*}(\boldsymbol{\varphi}^*)-\mathcal{L}_{j^*}(\boldsymbol{\varphi}_w)\bigg)+ \tau G^2\Bigg)\\
    & \leq (\tau\cdot F(w)\cdot L_{max}+1)||\boldsymbol{\varphi}_{w}-\boldsymbol{\varphi}^*||^2 + \tau\Bigg(2 F(w)\cdot \bigg(\mathcal{L}_{j^*}(\boldsymbol{\varphi}^*)-\mathcal{L}_{j^*}(\boldsymbol{\varphi}_w)\bigg)+ \tau G^2\Bigg),
\end{align*}
}
where $*$ is by the definition of $H_w$. Denote $\alpha_w=2 \cdot F(w)\cdot \bigg(\mathcal{L}_{j^*}(\boldsymbol{\varphi}^*)-\mathcal{L}_{j^*}(\boldsymbol{\varphi}_w)\bigg)+ \tau G^2$. With $\tau L_{max}$ is sufficiently small, thus writing recursively the previous expression yields
\begin{equation}\label{itr_upb}
\begin{aligned}
    ||\boldsymbol{\varphi}_{w+1}-\boldsymbol{\varphi}^*||^2 & \leq ||\boldsymbol{\varphi}_{0}-\boldsymbol{\varphi}^*||^2+\tau\cdot\sum^w_{v=0}\alpha_v
\end{aligned}
\end{equation}
Now we analyze the expression for $\alpha_w$. Since $\mathcal{L}_{j^*}(\boldsymbol{\varphi}^*)\leq\mathcal{L}_{j^*}(\boldsymbol{\varphi}_w)$ and $\tau G^2$ is sufficiently small, thus when $\boldsymbol{\varphi}_w$ is not close to $\boldsymbol{\varphi}^*$, $\alpha_w$ is negative. Then with Eq. (\ref{itr_upb}), we know that the distance between $\boldsymbol{\varphi}_w$ and $\boldsymbol{\varphi}^*$ is decreased gradually. When $\boldsymbol{\varphi}_w$ finally converges into neighbor of $\boldsymbol{\varphi}^*$, formally, for any $\epsilon>0$, when $\alpha_w\geq -2\epsilon$, we have that:
\begin{align*}
    \mathcal{L}_{j^*}(\boldsymbol{\varphi}_w)-\mathcal{L}_{j^*}(\boldsymbol{\varphi}^*)\leq \frac{2\epsilon+\tau G^2}{2F(w)}.
\end{align*}
Combining the definition of $C_k$, we can derive that
\begin{align*}
    C_k\leq\mathcal{L}_{j^*}(\boldsymbol{\varphi}_w)\leq C_k+\frac{2\epsilon+\tau G^2}{2F(w)}.
\end{align*}
Since $j^*$ achieves the maximum of $\mathcal{L}_{j}(\boldsymbol{\varphi}_w)$, so others $\bigg\{\mathcal{L}_{{j}}(\boldsymbol{\varphi}_w)\;\big|\;j\in[M], j\neq j^*\bigg\}$ also are in the region. Now we are left for finding $H$. Let $H$ be the first that $\alpha_H\geq -2\epsilon$. Formally, $H=\arg\min_w\alpha_w\geq -2\epsilon$. Note that the LHS of Eq. (\ref{itr_upb}) is always positive, so we have
\begin{align}
    0&\leq||\boldsymbol{\varphi}_{0}-\boldsymbol{\varphi}^*||^2+\tau\cdot\sum^H_{v=0}\alpha_v\\&\leq ||\boldsymbol{\varphi}_{0}-\boldsymbol{\varphi}^*||^2 + \tau\cdot \sum_{v=0}^H\cdot(-2\epsilon)\label{appendix:derive_w}
\end{align}
Rearrange Eq. (\ref{appendix:derive_w}), we get that:
\begin{equation}\label{find_w3}
\begin{aligned}
    H\leq \frac{||\boldsymbol{\varphi}_{0}-\boldsymbol{\varphi}^*||^2}{2\tau\epsilon}.
\end{aligned}
\end{equation}
Thus we completes the proof.
\end{proof}

	\section{Experimental Details and Results}
	
\subsection{Cooperative Treasure Collection (CTC)}\label{CTC_details}
\textbf{CTC}: CTC is extended based on the Cooperative Treasure Collection \cite{MAAC} environment. In CTC, 3 "treasure hunters" and 1 "treasure bank" work cooperatively to collect and store treasures. The role of hunters it to collect treasures and then store them into the bank. There are 3 treasures in the map and will get re-spawn randomly once collected. The role of bank is to store treasures from hunters. Hunters and bank can select a coordination point within a square box as actions to move their positions. In CTC-fair, agents' observations contain others' positions with respect to their own, while in CTC-safe, agents additionally have observations that tell the distance of the unsafe regions with respect to their own. Hunters will receive reward for successful collection of treasures and get punished for colliding with other hunters. Besides, each agent also receives reward that is positively correlated with the amount of treasures stored in banks. The team-average immediate reward is just the average immediate reward of all agents. CTC-safe and CTC-fair consider different types of costs, which are shown below. 

\textbf{CTC-safe}: In CTC-safe, we add unsafe regions into the map. These unsafe regions can be regarded as different objects in real world, for example, deep traps or shallow puddles. In experiment 3 unsafe regions are added and each has different diameters and constraint bounds. The immediate cost $j\in[3]$ has following expression:
\begin{equation}
c^j_i(s_t, \boldsymbol{a_t})=
\begin{cases}
  1, & \text{if agent }i\; \text{in unsafe region }j,\\
  0, & \text{otherwise},
\end{cases}
\end{equation}
and $c^j(s_t, \boldsymbol{a_t})=\frac{1}{4}\sum_{i\in[4]} c^j_i(s_t, \boldsymbol{a_t})$ reflects the team-average immediate cost of $j$.

\textbf{CTC-fair}: In CTC-fair, we consider unfairness as cost. Unfairness considers the max difference of the accumulated distance travelled by agents: let $d_i(s_t, \boldsymbol{a_t})=\sum_{t'=0}^{t}||\,a^i_{t'}\,||$ denote the accumulated distance travelled by agent $i$, then the immediate cost has following expression:
\begin{equation}
c_i(s_t, \boldsymbol{a_t})=\max_{i'}d_{i'}-\min_{i'}d_{i'},
\end{equation}
and thus their average $c(s_t, \boldsymbol{a_t})=\frac{1}{4}\sum_{i\in[4]} c_i(s_t, \boldsymbol{a_t})$ also evaluates the max difference of the accumulated distance among agents. 

\subsection{Constrained Directional Sensor Network (CDSN)}
\textbf{DSN}: We extends the original DSN environment \cite{xu2020learning} by considering continuous action space. 5 sensors are depolyed and 4 objects move around in the system. Sensors only have limited field of view and they take actions to adjust angles to capture more objects. The action space we considered is set as $[-5, 5]$. Two immediate reward is returned at each step: the global coverage ratio calculates the number of objects captured versus all objects in the system; the individual reward calculates the objects captured by one sensor (angle and distance related).

\textbf{CDSN}: We consider operational cost to adjust the sensor angle. More specifically, for each agent $i$, immediate cost $c_i(s_t, \boldsymbol{a_t})=|a^i_t|$. The average cost $c(s_t, \boldsymbol{a_t})=\frac{1}{5}\sum_{i\in[5]} c_i(s_t, \boldsymbol{a_t})$ reflects the immediate average operational cost.

\subsection{Constrained Large-scale Fleet Management (CLFM)}\label{clfm_details}
\textbf{LFM}: LFM studies relocating idle drivers in online-hailing platforms\footnote{Examples include Didi Chuxing (http://www.didichuxing.com/en/) and Uber (https://www.uber.com/).} in a distributed way. The data provided by KDD Cup 2020\footnote{See https://outreach.didichuxing.com/competition/kddcup2020/ for more information.} includes approximate 1 million orders from November 1 to November 30, 2016 and hexagon grid data from Chengdu city, China\footnote{Data source: DiDi Chuxing GAIA Open Dataset Initiative, see https://gaia.didichuxing.com.}. The orders data contains basic information of orders, such as origin location and time step, destination location and time step, duration and fee, etc. The hexagon grid data contains the longitudes and latitudes of six vertices of each grid, with each covering approximately 1 square kilometers. In simulation, we selected orders in the urban 103 grids, and start from 6 AM to 12 PM. Our simulator is mainly based on the grid simulator designed by \cite{efficient_fm}. In the simulator there are 500 drivers and 36 time steps in total, each step with 10 minutes. At each time step, the simulator works as follows:
\begin{itemize}[leftmargin=*]
	\item Simulator firstly loads orders and removes unserved orders of last time step.
	\item Idle drivers (i.e., those who are not serving orders) make decisions for reposition: they firstly generate a 3 dimensional vector representing the weight for repositioning. Score for each candidate grid (includes neighboring grids and current grid) is given by the product of the weight and the feature of that grid (each grid has 3 dimensional features, including the number of idle drivers in grid, the number of orders in grid and the time step). Finally drivers sample out a destination based on the normalized scores among the scores. Drivers will take one time step to accomplish reposition.
	\item Orders get dispatched to idle drivers by order dispatch algorithms. Since we do not focus on order dispatching algorithm in this paper, we just randomly dispatch orders onto agents for simplicity, while we state that our simulator can work with any standard order dispatching algorithms. Drivers who are successfully dispatched with orders will reposition to the destination grid by grid, and cannot serve other orders during the process.
	\item For idle drivers, they get 0 immediate reward, while for order-serving drivers, when order is picked-up, i.e., drivers reach to the passengers and start to serve, they will be rewarded with the fee of the order.
\end{itemize}

\textbf{CLFM}: CLFM considers two types of cost. The first considers the city-wide demand-supply gap, which is reflected by the KL-divergence between the distribution of idle drivers and the distribution of orders. At each time step, each agent will receive the identical KL-divergence as the immediate cost. The other cost evaluates the unfairness of drivers' accumulated income. Denote $e^t_i=\sum_{t'=0}^tr^{t'}_i(s_{t'}, \boldsymbol{a_{t'}})$, thus each agent will receive 
\begin{equation}
c^{2}_i(s_t, \boldsymbol{a_t})=(e^t_i-\Bar{e^t})^2,
\end{equation}
where $\Bar{e^t}=\frac{1}{N}\sum_{i\in[N]}e^t_i$. Thus $c^{2}(s_t, \boldsymbol{a_t})=\frac{1}{N}\sum_{i\in[N]}c^{2}_i(s_t, \boldsymbol{a_t})$.

\subsection{Practical Implementations}\label{practical_implemetation}
In this section we introduce the practical implementations of DeCOM, including the evaluation of constraints violation and the hyper-parameter settings.

\textbf{Evaluation of Constraints Violation.} Recall section \ref{optimize_g}, Eq. (\ref{phi_loss:ideal}) evaluates the constraint violation based on $J^{C_j}(\boldsymbol{f}, \boldsymbol{g})$, which is further approximated by Eq. (\ref{phi_loss:prac}), $\forall{j}\in[M]$. The action value function $Q^{\zeta_j, k+1}(s_0, \boldsymbol{a})$ represents the expected team-average cost $j$ given initial state $s_0$ and action $\boldsymbol{a}$, which is updated iteratively by minimizing the TD error \cite{sutton_td}. However in practice we found that $Q^{\zeta_j, k+1}(s_0, \boldsymbol{a})$ can be biased at the beginning stage of training, mainly because the long-term value at the initial time step cannot be approximated accurately without accurate approximation of later time steps (which is also pointed out by \cite{sutton2018reinforcement}: TD methods update the estimates of the value functions via bootstraps of previous estimates on sampled transitions). To speed up training and obtain accurate evaluation of constraints violation as possible, we turn to evaluate constraints violation at each time step instead of merely at the initial time step. 

We adopt the observations proposed in a recent work \cite{pmlr-v119-satija20a} to achieve the goal. When the \textit{CCMG} is episodic, i.e., the game only has $T\in\mathbb{Z}^+<\infty$ time steps and $\gamma=1$, then at each time step $t\in[0, T]$, the long term cost $j\in[M]$ at the initial time step in (\ref{original_problem}) can be decomposed as follows:
\begin{equation}\label{bvf_expression}
    \begin{aligned}
    \mathbb{E}\bigg[\sum_{t'=0}^t c^j(s_{t'}, \boldsymbol{a}_{t'})\;|\;s_0, \boldsymbol{\pi}\bigg] + \mathbb{E}\bigg[\sum_{t'=t}^T c^j(s_{t'}, \boldsymbol{a}_{t'})\;|\;s_0, \boldsymbol{\pi}\bigg] - \mathbb{E}\bigg[c^j(s_{t}, \boldsymbol{a}_{t})\;|\;s_0, \boldsymbol{\pi}\bigg]\leq D_j.
\end{aligned}
\end{equation}
In practice, at episode $k$ of Alg. \ref{decom_alg}, when we have a mini-batch of samples $\mathcal{B}$, the constraints violation at each time step $t\in[0, T]$ can be approximated empirically via:
\begin{equation}\label{cv_eachstep}
    \begin{aligned}
    \Tilde{\mathcal{L}}^{t}_j(\boldsymbol{\phi};\boldsymbol{\theta}_{k+1}) &= \max\Big(0, \mathbb{E}_{s_{t'}, \boldsymbol{a}_{t'}\sim\mathcal{B}}\bigg[\sum_{t'=0}^t c^j(s_{t'}, \boldsymbol{a}_{t'})\bigg]\\&+\mathbb{E}_{s_{t}, \boldsymbol{a}_{t}\sim\mathcal{B}}\bigg[Q^{\zeta_j, k+1}(s_{t}, \boldsymbol{a}_{t})\bigg]-\mathbb{E}_{s_{t}, \boldsymbol{a}_{t}\sim\mathcal{B}}\bigg[ c^j(s_{t}, \boldsymbol{a}_{t})\bigg] -D_j\Big)^2, \forall{j}\in [M].
    \end{aligned}
\end{equation}
Note that $\Tilde{\mathcal{L}}^{0}_j$ in Eq.(\ref{cv_eachstep}) is just $\Tilde{\mathcal{L}}_j$ in Eq.(\ref{phi_loss:prac}). In practice, to avoid the possible sub-optimality gap of Eq. (\ref{bvf_expression}) \cite{pmlr-v119-satija20a}, we select the maximum of $\Tilde{\mathcal{L}}^{0}_j(\boldsymbol{\phi};\boldsymbol{\theta}_{k+1})$ and $\frac{1}{T}\sum_t\Tilde{\mathcal{L}}^{t}_j(\boldsymbol{\phi};\boldsymbol{\theta}_{k+1})$ as the evaluation of constraint violation of cost $j$.

\textbf{Hyper-Parameter Settings.} 
All experiments were done on Intel(R) Xeon(R) Silver 4116 CPU. Next we introduce the hyper-parameters for different environments respectively.

In CTC-safe and CTC-fair, base policy and perturbation policy networks contain \textit{linear} layers and \textit{leaky relu} activation functions. The final output is activated by \textit{Tanh} to keep each dimension of action in $(-1, 1)$. During training procedure, noise generated by Ornstein-Uhlenbeck process is added onto action for exploration \cite{lillicrap2019continuous}. Reward critic and cost critic (CTC-safe:only one cost critic is trained since the 3 costs have the same orders of magnitude, and they are distinguished by one-hot index input.) networks also have \textit{linear} layers and \textit{leaky relu} activation function. There are total $10^5$ episodes and each episode has 25 time steps. Buffer stores the latest $10^6$ experience tuples and training is conducted every 12 episode. Mini-batches have size 1024. All networks' parameters are updated by \textit{Adam}. Base policy, perturbation policy, reward critic, cost critic have learning rate 0.001, 0.003, 0.001, 0.003 respectively. Discount factor for reward is set as 0.99. The maximal gradient norm $G$ is set as 0.5. $\lambda$ who controls the magnitude of perturbation is set as 1 in CTC-safe and 0.01 in CTC-fair. Target networks of base policy and reward critic are updated with 0.01. Target networks of perturbation policy and cost critic are updated with rate 0.05 at initial, and the rate decreases to 0.01 gradually. We test $W$ (the $W$ in Alg.\ref{updating_alg}) from set $\{1, 2, 3\}$ with the consideration of computational efficiency and set $W=1$ in experiment due to its better performance. 3 random seeds are trained independently and test results are the average results of 100 episodes. 

In CDSN, networks of base policy, perturbation policy, reward critic and cost critic are composed by \textit{linear} layers and \textit{elu}, \textit{tanh} activation functions. Base policy generates mean and variance of a Gaussian distribution to sample base actions. Perturbation policy use Ornstein-Uhlenbeck process to generate noise for exploration. There are total $30000$ episodes and each episode has 101 time steps. Buffer stores the latest 10 episode's experience tuples and training is conducted at every 10 episode. All networks' parameters are updated by \textit{Adam}. Base policy, perturbation policy, reward critic, cost critic have learning rate 0.0005, 0.0003, 0.0001, 0.0001 respectively. Discount factor for reward is set as 0.99. The maximal gradient norm $G$ is set as 0.5. $\lambda$ who controls the magnitude of perturbation is set as 1. Target networks of reward critic and cost critic are updated with 0.05. Target networks of base policy and perturbation policy are updated with rate 0.03 and 0.01 respectively. $W$ is set as 1. 5 random seeds are trained independently and test results are the average results of 100 episodes.

In CLFM, networks of base policy, perturbation policy, reward critic and cost critics are composed by \textit{linear} layers and \textit{elu} activation functions. Base policy generates mean and variance of a Gaussian distribution to sample base actions. Perturbation policy also use Ornstein-Uhlenbeck process to generate noise for exploration. In CLFM, 2 cost critics are trained since their costs have large difference in orders of magnitude. There are total $2000$ episodes and each episode has 36 time steps. Buffer stores the latest episode's experience tuples and training is conducted at every episode. All networks' parameters are updated by \textit{Adam}. Base policy, perturbation policy, reward critic, cost critic have learning rate 0.001, 0.0003, 0.0001, 0.0001 respectively. Discount factor for reward is set as 0.99. The maximal gradient norm $G$ is set as 0.5. $\lambda$ who controls the magnitude of perturbation is set as 1. Target networks of reward critic and cost critics are updated with 0.0025. Target networks of base policy and perturbation policy are updated with rate 0.1 and 0.03 respectively. We test $W$ from set $\{1, 2\}$ and set $W=1$ due to its better performance. 3 random seeds are trained independently and test results are the average results of the final 100 episodes.


\subsection{Additional Results}\label{exp_details}
\subsubsection{Different $\lambda$}
In Table \ref{ctc-table}, the $\lambda$ of CTC-fair environment is set as 0.01. We test CTC-fair further with different $\lambda$ to show the significant difference in both the reward and the unfairness, as shown in Table \ref{diff:lambda}. As $\lambda$ increases, the reward decreases and unfairness increases in general. This phenomenon is reasonable to some extent since in CTC-fair, larger weight in the perturbation policy will changes the base action with a large amount, which can have larger difference between trajectories more easily, thus it has worse performance in the unfairness metric. In addition, action in CTC environment is bounded ([-1, 1] in each dimension), and those who violates the bound will be clipped. Therefore larger weights can easily violate the bound and make actions become meaningless, which leads to bad performance in reward. We can expect that, performance in CTC-safe can be improved with smaller $\lambda$.

\subsubsection{Training Curves}
Fig. \ref{train:CTC-safe} presents the training curves in CTC-safe environment. DeCOM and its variant DeCOM-I, DeCOM-N all satisfy constraints finally, FP-0 has the highest reward but violates the constraints. La is unstable and hard to converge. Fig. \ref{train:CTC-fair} and \ref{train:cdsn} present the training curves in CTC-fair and CDSN environment. Fig. \ref{train:clfm} presents the training curves in CLFM. DeCOM achieves the highest reward with the lowest demand-supply gap. We can observe that, with slightly more training episodes, DeCOM is likely to satisfy constraint on unfairness as well.
\begin{figure}
    \centering
    \includegraphics[scale=0.4]{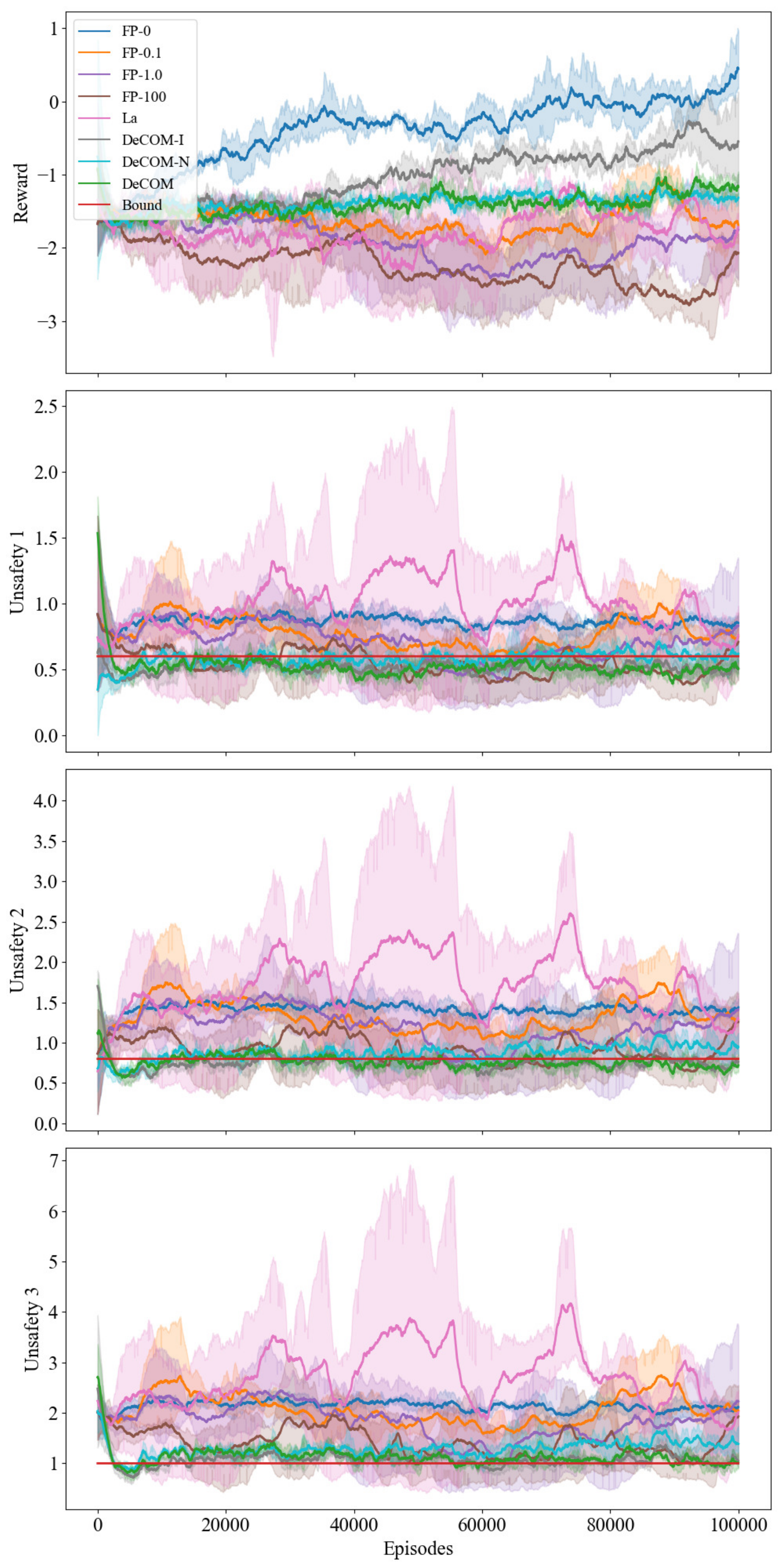}
    \caption{Training Curves of CTC-safe.}
    \label{train:CTC-safe}
\end{figure}

\begin{figure}
    \centering
    \includegraphics[scale=0.4]{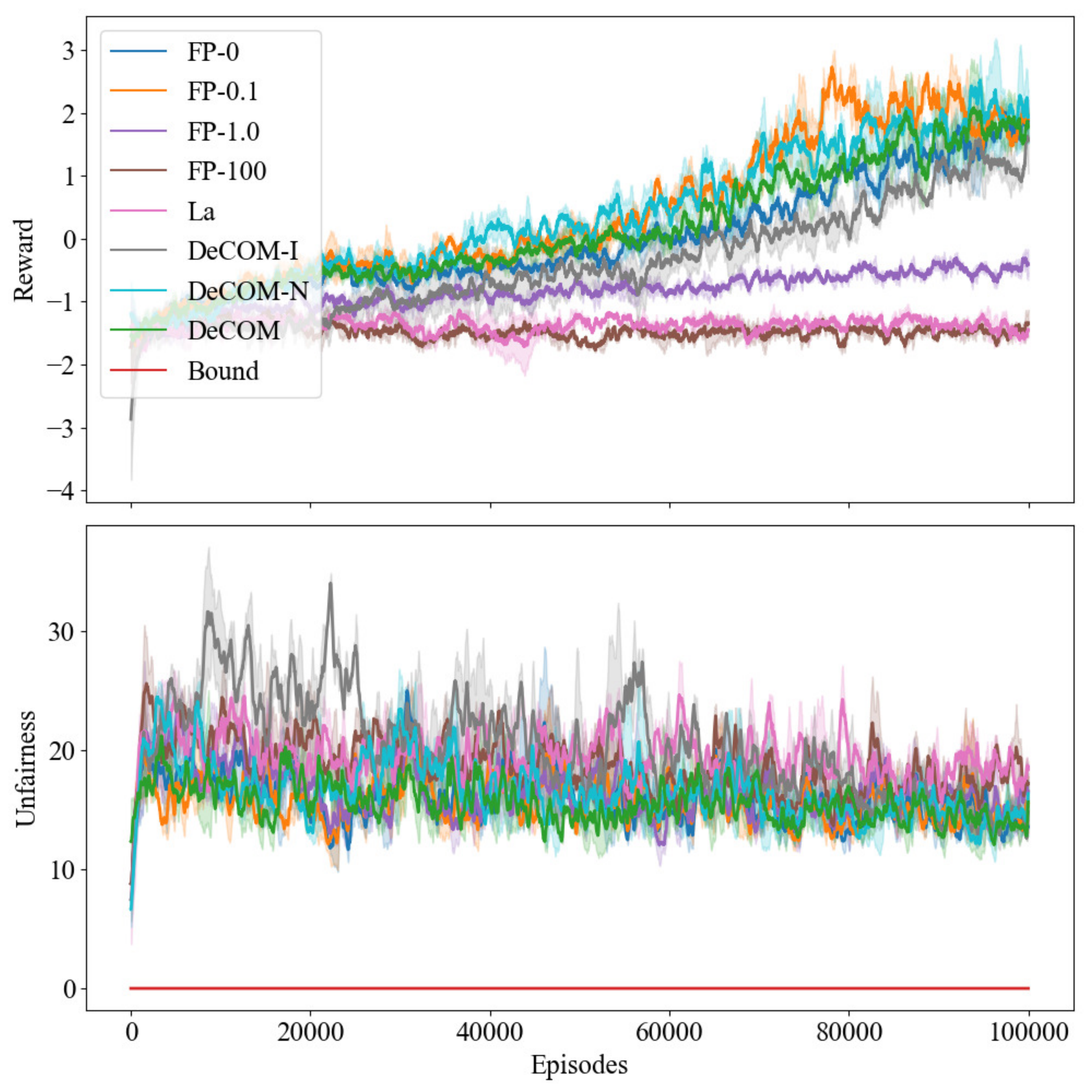}
    \caption{Training Curves of CTC-fair.}
    \label{train:CTC-fair}
\end{figure}
\begin{figure}
	\centering
	\includegraphics[scale=0.4]{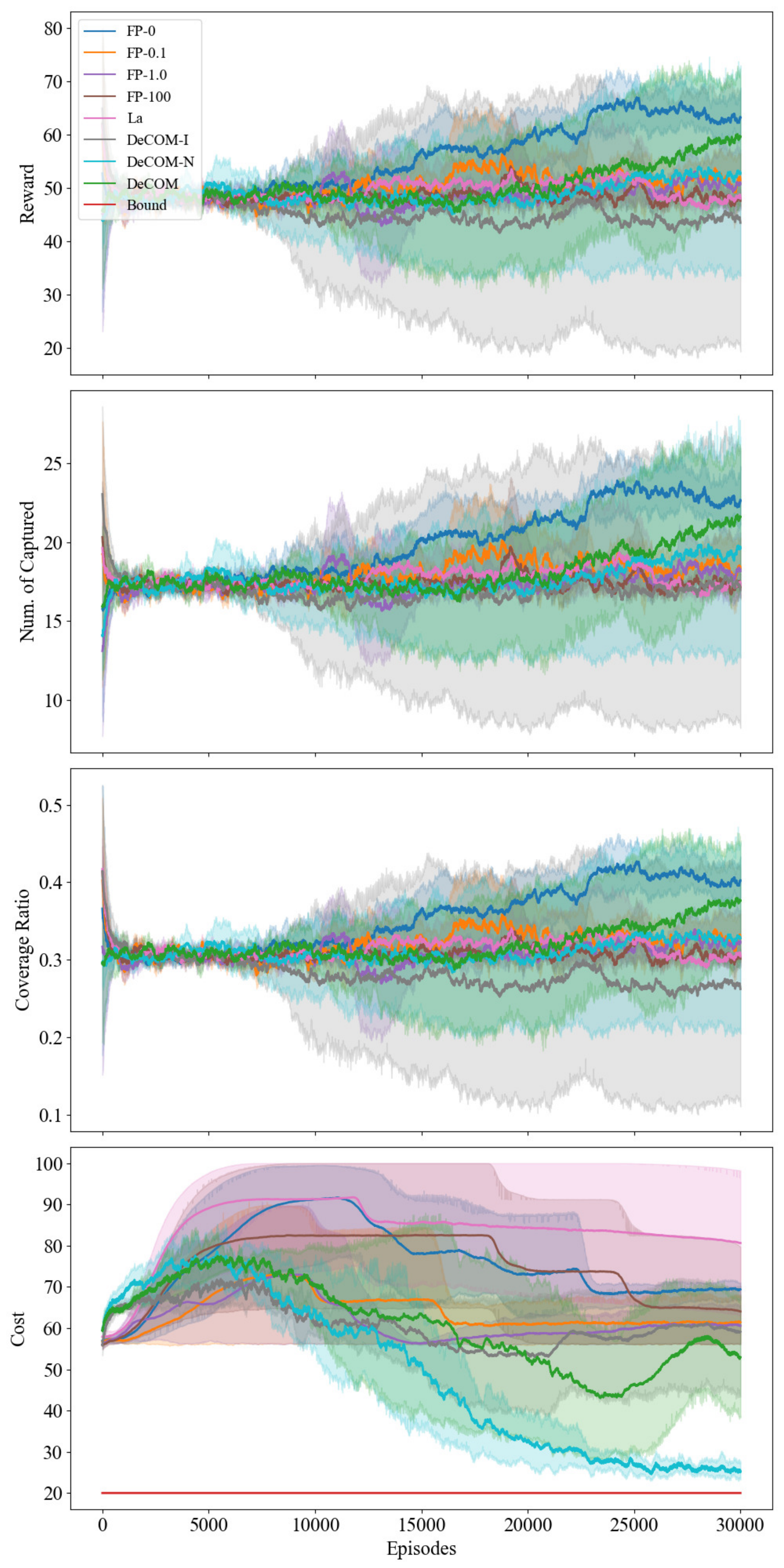}
	\caption{Training Curves of CDSN.}
	\label{train:cdsn}
\end{figure}
\begin{figure}
    \centering
    \includegraphics[scale=0.4]{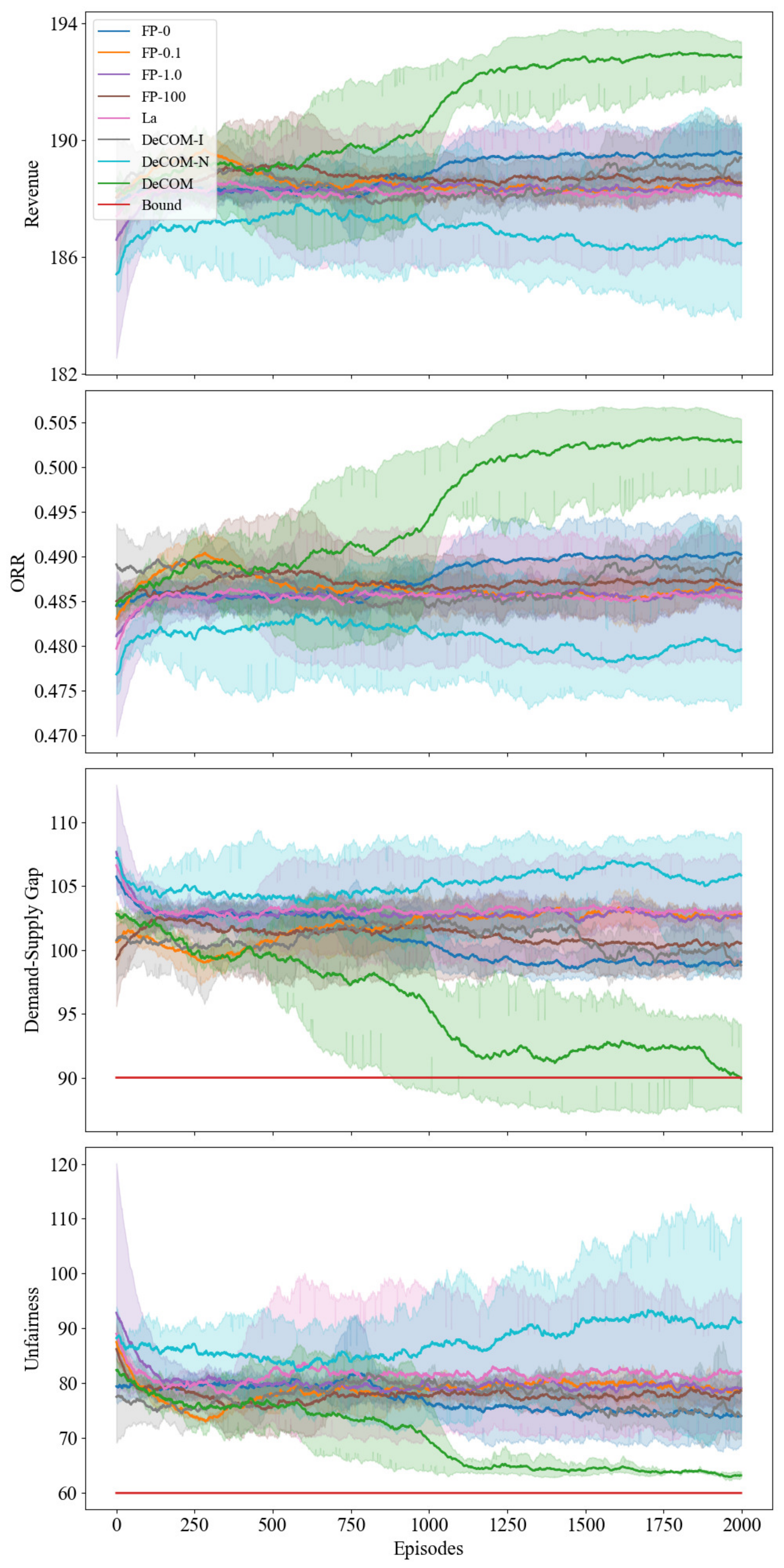}
    \caption{Training Curves of CLFM.}
    \label{train:clfm}
\end{figure}

\begin{table}[]
	\caption{Different $\lambda$ in CTC-fair with DeCOM}
	\label{diff:lambda}
	\centering
		\begin{tabular}{c|cc}
			\hline
			$\lambda$ & \textbf{Reward} & \textbf{Unfairness} \\ \hline
			0.01   & \textbf{3.76}             $\pm$  0.84      & \textbf{8.11}        $\pm$  0.08      \\
			0.1    & 2.50             $\pm$  0.74      & 10.33                $\pm$  2.32      \\
			0.5    & \textbf{4.10}    $\pm$  0.62      & 11.16                $\pm$  1.27      \\
			1      & 0.27             $\pm$  0.29      & 11.08                $\pm$  0.70      \\
			2      & 0.06             $\pm$  0.53      & 15.06                $\pm$  2.22      \\ \hline
		\end{tabular}%
\end{table}
\end{document}